\theoremstyle{plain}
\newtheorem{myTheo}{Theorem}
\newtheorem{myLemma}{Lemma}
\newtheorem{myDef}{Definition}
\title{Non-Gradient Manifold Neural Network}
\author{%
	Rui Zhang
	\thanks{ Rui Zhang, Ziheng Jiao, Hongyuan Zhang, and Xuelong Li are with School of Computer Science and School of Artificial Intelligence, Optics and Electronics (iOPEN), Northwestern Polytechnical University, Xi'an 710072, Shaanxi, P. R. China.} \\
	Northwestern Polytechnical University\\
	\texttt{ruizhang8633@gmail.com} \\
	\And
	Ziheng Jiao \\
	Northwestern Polytechnical University\\
	\texttt{jzh9830@163.com} \\
	\And
	Hongyuan Zhang \\
	Northwestern Polytechnical University\\
	\texttt{hyzhang98@gmail.com} \\
	\And
	Xuelong Li \thanks{Xuelong Li is the Corresponding Author.}\\
	Northwestern Polytechnical University\\
	\texttt{xuelong\_li@ieee.org} \\
}
\begin{document}
	
	\maketitle
	\author{
		Rui Zhang\footnote\and
		Ziheng Jiao\and
		Hongyuan Zhang\And
		Xuelong Li{Contact Author}\\
		\affiliations
		School of Computer Science and School of Artificial Intelligence, Optics and Electronics (iOPEN), Northwestern Polytechnical University\\
		\emails
		ruizhang8633@gmail.com,
		jzh9830@163.com,
		hyzhang98@gmail.com,
		xuelong\_li@ieee.org
	}

	\begin{abstract}
		Deep neural network (DNN) generally takes thousands of iterations to optimize via gradient descent and thus has a slow convergence. In addition, softmax, as a decision layer, may ignore the distribution information of the data during classification. Aiming to tackle the referred problems, we propose a novel manifold neural network based on non-gradient optimization, i.e., the closed-form solutions. Considering that the activation function is generally invertible, we reconstruct the network via forward ridge regression and low rank backward approximation, which achieve the rapid convergence. Moreover, by unifying the flexible Stiefel manifold and adaptive support vector machine, we devise the novel decision layer which efficiently fits the manifold structure of the data and label information. Consequently, a jointly non-gradient optimization method is designed to generate the network with closed-form results. Eventually, extensive experiments validate the superior performance of the model.
	\end{abstract}
	
	\section{Introduction}
	Deep neural networks, a classic method in machine learning, consists of the number of neurons. Generally, it has three types of layers including the input layer, latent layer, and decision layer. In the DNN, any neuron of the $i$-th layer must be connected with any neuron of the $i$+$1$-th layer. Besides, the activation function is introduced to provide the model with powerful nonlinear mapping ability. Based on the simple operation of feature extraction and the good ability of feature representation, DNN has made great achievements in image processing \cite{NIPS2016_e56b06c5}, prediction systems \cite{1997Effective}, pattern recognition \cite{NIPS2016_6ea2ef73}, and so on.  
	
	Although DNN has many virtues, it is inefficient to extract meaningful features in high-dimensional data space. Additionally, owing to containing a large number of parameters, DNN is puzzled by over-fitting and low training speed. \cite{2004Cooperative} combines some generalized multi-layer perceptrons and utilizes the cooperative convolution to train the model. Apart from that, the backward propagation based on gradient descent is used to optimize the deep neural network. It propagates the residual error of the output layer to each neuron via chain rule and optimizes the parameters iteratively. Christopher \emph{et al}. \cite{de2015global} devise a step size scheme for SGD and prove that the proposed model can converge globally under broad sampling conditions. Aiming to accelerate the speed of the gradient descent, a reparameterization of the weight vectors is introduced into a neural network \cite{NIPS2016_ed265bc9}. TinyScript \cite{fu2020don} equips the activations and gradients with a non-uniform quantization algorithm to minimize the quantization variance and accelerate the convergence.

	The above-mentioned methods have made some improvements to the performance. However, they exist some weakness. On the one hand, gradient descent and its variants are commonly utilized to optimize these methods, which may lead to slow convergence and make the objective value unstable near the minima. On the other hand, due to simple calculation, softmax is extensively acted as the decision layer in DNN. However, it ignores the inner distribution of the data and lacks some interpretability.
	
	\textbf{Interestingly, the activation function is invertible in general}. Based on this, a novel manifold neural network is proposed in this paper. It not only can be optimized with a non-gradient strategy to accelerate the convergence but also efficiently fits the manifold structure and label information via a devised decision layer. In sum, our major contributions are listed as follows:
	
	$1)$ By utilizing the invertibility of the activation function, we reconstruct the neural network via the forward ridge regression and low rank backward approximation. Furthermore, a non-gradient strategy is designed to optimize the proposed network and can provide the network with analytic solutions directly.
	
	$2)$ By unifying the flexible Stiefel manifold and adaptive support vector machine reasonably, a novel decision layer is put forward. Compared with softmax, it has more interpretability via fusing the label information and distribution of the data. Moreover, this layer can be directly solved with an analytical method.
	
	$3)$ A novel manifold neural network is proposed and jointly optimized via a non-gradient algorithm. Moreover, extensive experiments verify the superiority and efficiency of the proposed model.

	\paragraph{Notations} In this paper, we use the boldface capital letters and boldface lower letters to represent matrics and vectors respectively. The $\bm{m_i}$ is supposed as the $i$-th coloumn of the matrix $\bm{M} \in R^{d \times c}$. $m_{ij}$ is the element of the $\bm M$. And $\bm{1}_c$ is a unit column vector with dimension c. ${\rm Tr}(\bm M)$ is the trace of $\bm M$. The $l_2$-norm of vector $\bm{v}$ and the Frobenius norm of $\bm{M}$ is denoted as $\|\bm v\|_2$ and $\|\bm M\|_F$. The Hadamard product is defined as $\odot$. $(\cdot)_+$ is $\max(\cdot, 0)$.

	\section{Related Work}
	
	\subsection{Deep Neural Network with Softmax} \label{dnn problem}
	As is known to us, DNN is a basic method in machine learning \cite{liu2017survey}. Owing to the excellent ability in representation learning, it has been widely used in many fields such as classification. Generally, DNN is divided into two stages, forward propagation, and backpropagation. Given a dataset $\bm X=\left\{ \bm{x_1},\bm{x_2},...,\bm{x_n} \right\} \in R^{d \times n}$ and $\bm Y \in R^{c \times n}$. Among them, $\bm X$ denotes the feature matrix with $n$ samples and $d$ dimensions. $\bm Y$ is a one-hot label matrix and $c$ is the class of the data. Based on this, the forward propagation of can be defined as 
	\begin{equation}
		\label{Neural Network}
		\left\{ 
		\begin{array}{l}
			\bm{Z}^l = (\bm{W}^l)^T \bm{H}^{l-1} + \bm{b}^l\bm{1}_n^T \\
			\bm{H}^l = \sigma(\bm{Z}^l)
		\end{array} 
		\right.
		,
	\end{equation}
	where $l \in \left\{ 0,1,...,t \right\}$ is the $l$-th layer in the DNN, $\bm{W}^l \in R^{d_l \times l_{l-1}}$ is the weight matrix in the $l$-th layer and $\bm{b}^l \in R^{d_l \times 1}$ is the bias vector. The input feature and output feature of the $l$-th layer are individually defined as $\bm{H}^{l-1} \in R^{d_{l-1} \times n}$ and $\bm{H}^l \in R^{d_l \times n}$ where $\bm{H}^{0}$ is equivalent to $\bm X$ and $\bm{H}^{t}$ is viewed as a predict label $\bm{\hat{Y}}$. $\bm{Z}^l \in R^{d_l \times n}$  is the latent feature matrix. $\sigma (\cdot)$ is the activation function. For classification, the softmax is employed to act as the decision layer and formulated as
	\begin{equation}
		\label{softmax}
		\hat{y}_{ij} = \frac{e^{z^{t}_{ij}}}{\sum_{j=1}^{c} e^{z^{t}_{ij}}}.
	\end{equation}
	Cross-entropy is widely utilized as the loss function. Given the learning rate $\alpha$, the parameters $\theta$ of the model can be optimized with the gradient descent via 
	\begin{equation}
		\label{gradient descent}
		\theta:=\theta-\alpha \frac{\partial Loss}{\partial \theta}.
	\end{equation}

\begin{figure*}[t]
	\centering
	\includegraphics[width=140mm]{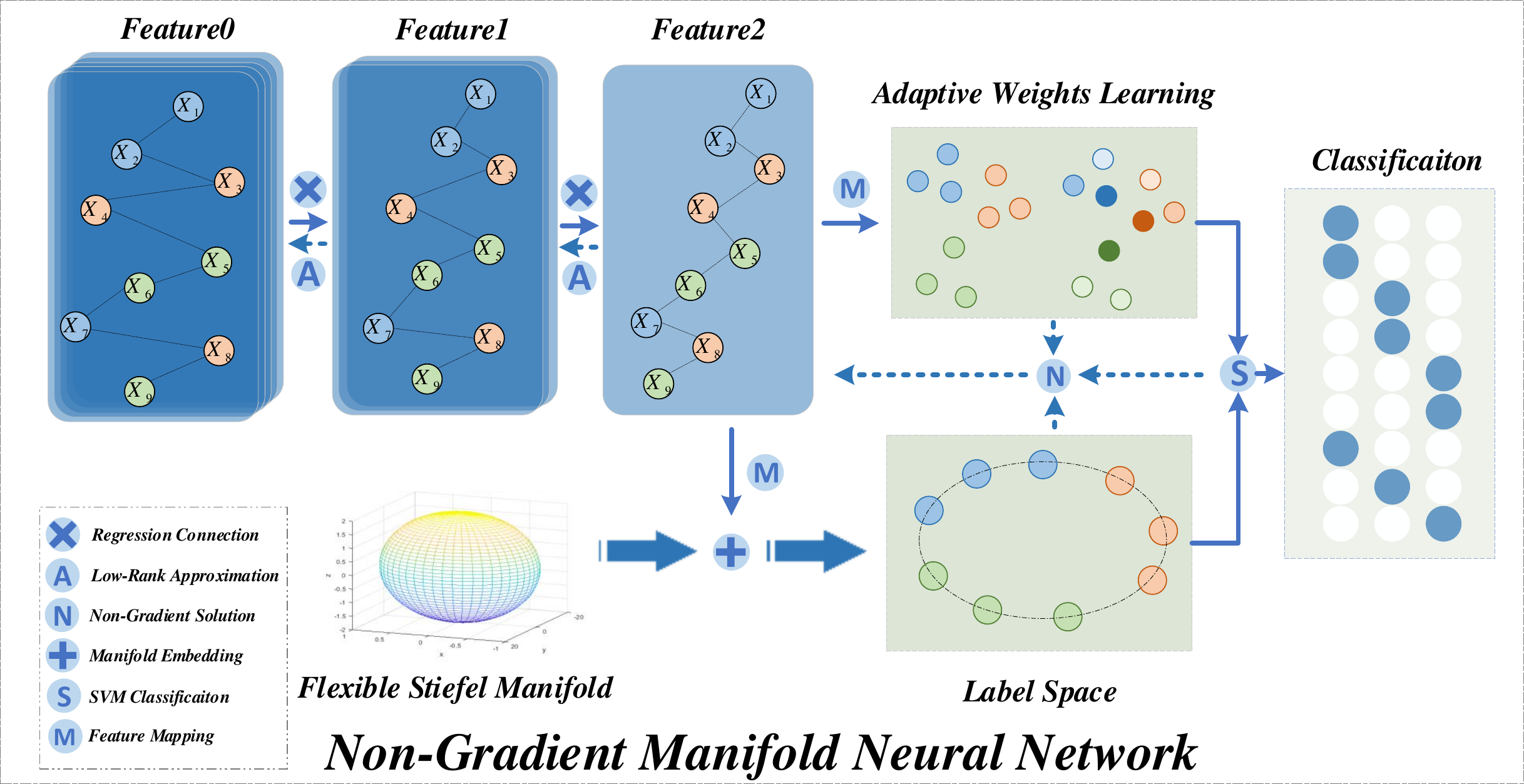}\\
	\caption{A framework of the proposed model. The blue, orange and green nodes represent three different data. As is shown, the reconstruction regression network extracts the deep features whose distribution satisfies a latent regression. Then, on the one hand, a flexible Stiefel manifold is embedding into the label space. On the other hand, these features are assigned with different weights via adaptive learning. Finally, unifying the two kinds of information, the model predicts the labels via SVM.}
	\label{framework}
\end{figure*}
	
	From Eq. (\ref{gradient descent}), DNN generally takes thousands of iterations to optimize and obtain the approximate solutions of $\theta$. Besides, the value of the learning rate extremely affects the performance and convergence of the model. For example, if the value of $\alpha$ is improper, the loss will be unstable near the minima and the model can not converge. Therefore, we reconstruct the network via ridge regression to accelerate the convergence.  In addition, as shown in Eq. (\ref{softmax}), softmax mainly classifies the data according to the value of deep features and the property of the exponential function. However, it is weak to utilize the inner distribution of the data and exists AI ethical  problems with low interpretability. Compared with softmax, the SVM aims to find the hyper-planes in label space for classification.
	
	\subsection{Multi-Class Support Vector Machine}
	As a fundamental model of machine learning, Support Vector Machine (SVM) \cite{pradhan2012support} has been developed mostly in classification and regression. It utilizes the topological information in the label space to find a hyper-plane for binary classification and make the margin maximum. The classic objective function of SVM can be defined as 
	\begin{equation}
		\label{binary svm}
		\min \limits_{\bm{w}} \frac{1}{2} \|\bm{w}\|_2^2, \quad s.t.\ y_i(\bm{w}^T \bm{x}_i+b)\geq 1,\ i=1,...,n,
	\end{equation}
	where $\bm{w}$ and $b$ are the parameters to learn, $\bm{x}_i \in R^d$ is the $i$-th sample, and $y_i \in \left\{ -1, 1\right\}$ is the label of $\bm{x}_i$. Furthermore, define the $\xi(\bm{w},b|\bm{x}_i)$ as the misclassification loss, Eq.        (\ref{binary svm}) can be written as 
	${\min}_{\bm{w},b} \sum_{i=1}^{n}\xi(\bm{w},b|\bm{x}_i)+\lambda ||w||_2^2$ where $\lambda$ is the trade-off parameters. Apart from that, the soft margin generally is utilized to avoid the over-fitting and the $\xi(\bm{w},b|\bm{x}_i)$ represnets the square hinge loss which can be defined as $\xi(\bm{w},b|\bm{x}_i)=(1-y_i(\bm{w}^T \bm{x}_i+b))_+^2$. Thus, the primal loss can be given as
	\begin{equation}
		\label{hinge svm loss}
		\min \limits_{\bm{w},b} \sum_{i=1}^{n} (1-y_i(\bm{w}^T \bm{x}_i+b))_+^2 + \lambda ||\bm{w}||_2^2.
	\end{equation}
	
	The One-versus-Rest (OvR) strategy is employed to extend the binary SVM into multi-class scenarios. Besides, all classifiers can be simultaneously optimized like \cite{Crammer2002On}. Therefore, the multi-class formulation of  Eq. (\ref{hinge svm loss}) can be written as
	\begin{equation}
		\label{OvR SVM}
		\min \limits_{\bm{w}_j,b_j} \sum_{j=1}^{c} \sum_{i=1}^{n} [(\bm{w}_j^T \bm{x}_i+b_j)-y_{ij}-y_{ij} m_{ij}]^2 + \lambda ||\bm{w}_j||_2^2,
	\end{equation}
	where $c$ is the number of classes and $y_{ij}$ is the element of the one-hot label matrix $\bm{Y} \in R^{c \times n}$. $m_{ij}$ is the element of slack matrix $\bm{M}\in R^{c \times n}$. 
	Apart from that, \cite{2012Multi} introduces the $l_{2,1}$-norm to avoid the model misled by the outliers. Aiming to improve the performance, the scaling factor is integrated with SVM for feature selection \cite{2017Feature}. Although the method mentioned above has improved the performance of the SVM, they ignore the latent distribution of the data. In this paper, we not only unify the flexible Stiefel manifold with adaptive SVM to fit the inner manifold structure of the deep embeddings but also employ it as a novel decision layer with higher interpretability for a manifold neural network.

	\section{Non-Gradient Manifold Neural Network}
	Aiming to tackle the weakness referred to in Subsection \ref{dnn problem}, we propose a novel manifold neural network. The network is reconstructed via ridge regression and low rank approximation. Apart from that, it embeds a flexible Stiefel manifold into an adaptive support vector machine to act as a decision layer. More importantly, a novel backward optimization with non-gradient is designed to solve the model and obtain the closed-form results directly. The framework is illustrated in Fig \ref{framework}. Besides, the proof of all lemmas and theorems is listed in supplementary.
	
	\subsection{Reconstruct Network via Ridge Regression} 
	Aiming to accelerate the convergence and obtain the closed-form results, we reconstruct the neural network. The forward propagation Eq. (\ref{Neural Network}) can be reformulated as
	\begin{equation}
		\label{regression forward}
		\bm{H}^l=\sigma((\bm{W}^l)^T \bm{H}^{l-1} + \bm{b}^l\bm{1}_n^T) \Leftrightarrow
		\min \limits_{\bm{H}^l} ||\bm{H}^l - \sigma((\bm{W}^l)^T \bm{H}^{l-1} + \bm{b}^l\bm{1}_n^T)||_F^2,
	\end{equation}
	where the left part can be obtained by taking the derivation of the right equation w.r.t $\bm{H}^l$.
	
	\textbf{Assumption:} Noticing that the activation function is generally invertible, we can define the invertibility of the activation  function as $\sigma^{-1}(\cdot)$.
	
	\textbf{Forward ridge regression:} 	
	From Eq. (\ref{regression forward}), this forward propagation tries to embed more original and real data information in the deep reconstruction features. Moreover, based on the above assumption, we can make some relaxation  and the backward optimization of each layer can be defined as
	\begin{equation}
		\label{regression backward}
		\min \limits_{\bm{W}^l, \bm{b}^l} ||(\bm{W}^l)^T\bm{H}^{l-1}+\bm{b}^l\bm{1}_n^T-\sigma^{-1}(\bm{H}^l)||_F^2.
	\end{equation} 
	Aiming to work out overfitting and has excellent performance on a small dataset, we introduce the $F$-norm regularization for the designed network:
	\begin{equation}
		\label{reg_regular}
		\min \limits_{\bm{W}^l, \bm{b}^l} ||(\bm{W}^l)^T\bm{H}^{l-1}+\bm{b}^l\bm{1}_n^T-\sigma^{-1}(\bm{H}^l)||_F^2 + \lambda ||\bm{W}^l||_F^2,
	\end{equation}
	where $\lambda$ is the trade-off of the regularization term. The $\bm{b}^l$ can be solved by Lemma \ref{cal_b}.
	
	\begin{myLemma} 
		\label{cal_b}
		Given a feature matrix $\bm X$ and a label matrix $\bm Y$, the problem 
		$\min \limits_{\bm{b}} ||\bm{W}^T\bm{X} + \bm{b}\bm{1}_n^T-\bm{Y}||_F^2+\lambda ||\bm{W}||_F^2$
		can be solved by $\bm{b}=\frac{\bm{Y}-\bm{W}^T\bm{X}}{n} \bm{1}_n$.
	\end{myLemma}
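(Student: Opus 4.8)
The plan is to treat the objective as a smooth convex function of $\bm{b}$ with $\bm{W}$ held fixed, and find its unique minimizer by setting the gradient to zero. Write $f(\bm{b}) = \|\bm{W}^T\bm{X} + \bm{b}\bm{1}_n^T - \bm{Y}\|_F^2 + \lambda\|\bm{W}\|_F^2$. Since the regularization term $\lambda\|\bm{W}\|_F^2$ does not depend on $\bm{b}$, it plays no role in the optimization over $\bm{b}$ and can be dropped for this computation. So the task reduces to minimizing $g(\bm{b}) = \|\bm{W}^T\bm{X} + \bm{b}\bm{1}_n^T - \bm{Y}\|_F^2$.

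First I would expand the Frobenius norm using $\|\bm{A}\|_F^2 = \mathrm{Tr}(\bm{A}^T\bm{A})$, letting $\bm{R} = \bm{W}^T\bm{X} - \bm{Y}$ so that the residual is $\bm{R} + \bm{b}\bm{1}_n^T$. Then $g(\bm{b}) = \mathrm{Tr}((\bm{R}+\bm{b}\bm{1}_n^T)^T(\bm{R}+\bm{b}\bm{1}_n^T)) = \mathrm{Tr}(\bm{R}^T\bm{R}) + 2\,\mathrm{Tr}(\bm{R}^T\bm{b}\bm{1}_n^T) + \mathrm{Tr}(\bm{1}_n\bm{b}^T\bm{b}\bm{1}_n^T)$. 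Using the cyclic property of the trace, the cross term becomes $2\,\mathrm{Tr}(\bm{1}_n^T\bm{R}^T\bm{b}) = 2\bm{b}^T\bm{R}\bm{1}_n$ and the quadratic term becomes $(\bm{1}_n^T\bm{1}_n)\bm{b}^T\bm{b} = n\|\bm{b}\|_2^2$. Next I would take the gradient with respect to $\bm{b}$, obtaining $\nabla_{\bm{b}}\, g = 2\bm{R}\bm{1}_n + 2n\bm{b}$, and set it to zero to get $\bm{b} = -\frac{1}{n}\bm{R}\bm{1}_n = \frac{1}{n}(\bm{Y} - \bm{W}^T\bm{X})\bm{1}_n$, which is exactly the claimed formula.

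Finally, to confirm this critical point is the global minimum, I would note that the Hessian of $g$ with respect to $\bm{b}$ is $2n\bm{I}$, which is positive definite since $n \geq 1$, so $g$ is strictly convex and the stationary point is the unique minimizer. Since adding the constant (in $\bm{b}$) term $\lambda\|\bm{W}\|_F^2$ does not change the minimizer, the same $\bm{b}$ minimizes $f$.

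There is no real obstacle here; the statement is a routine least-squares computation. The only point requiring a little care is the manipulation of traces with the Hadamard-free matrix identities — in particular correctly handling $\bm{1}_n^T\bm{1}_n = n$ and the cyclic reordering in the cross term — but these are elementary. One could alternatively observe that the problem is separable across the columns indexed by the rows of $\bm{b}$, reducing it to $c$ independent scalar problems of the form $\min_{b}\sum_i (r_i + b)^2$, each solved by $b = -\frac{1}{n}\sum_i r_i$, i.e. the negative row-mean of $\bm{R}$; assembling these gives the vector formula $\bm{b} = \frac{1}{n}(\bm{Y}-\bm{W}^T\bm{X})\bm{1}_n$ again.
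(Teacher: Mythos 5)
Your proof is correct and follows essentially the same route as the paper's: drop the $\bm{b}$-independent regularizer, expand the Frobenius norm via the trace, set the gradient with respect to $\bm{b}$ to zero, and use $\bm{1}_n^T\bm{1}_n = n$ to solve. Your added convexity check and the column-separability remark are nice touches (and your trace expansion is actually cleaner than the paper's, which has a sign slip in the cross terms that does not affect the final answer), but the argument is the same.
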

	
	Having solved the $\bm b$, we replace it into the Eq. (\ref{reg_regular}) like
	\begin{equation}
		\label{simplify with bias}
		\begin{split}
			\min \limits_{\bm{W}^l} &||\bm{W}^T\bm{X} + \frac{\sigma^{-1}(\bm{H}^l)-\bm{W}^T\bm{X}}{n} \bm{1}_n \bm{1}_n^T-\sigma^{-1}(\bm{H}^l)||_F^2 + \lambda ||\bm{W}^l||_F^2\\
			=\min \limits_{\bm{W}^l} &||\bm{W}^T\bm{X}\bm{C} -\sigma^{-1}(\bm{H}^l)\bm{C}||_F^2 + \lambda ||\bm{W}^l||_F^2,\\
		\end{split}
	\end{equation}
	where $\bm{C}=\bm{I}_n-\frac{1}{n}\bm{1}_n\bm{1}_n^T$ and $\bm{I}_n \in R^{n \times n}$ is a identity matrix. $\bm{C} \in R^{n \times n}$ is a centralized matrix according to the column. To represent more conveniently, we define the matrix $\hat{\bm{Y}}^l=\sigma^{-1}(\bm{H}^l)$ and the problem Eq. (\ref{simplify with bias}) can be transformed into 
	\begin{equation}
		\label{centralized loss}
		\min \limits_{\bm{W}^l} ||(\bm{W}^l)^T\bm{X}_{\bm C}^l- \bm{\hat{Y}}_{\bm C}^l||_F^2+\lambda ||\bm{W}^l||_F^2,
	\end{equation}
	where $\bm{X}_{\bm C}^l =\bm{X}^l \bm{C}$ and $\bm{\hat{Y}}_{\bm C}^l=\bm{\hat{Y}}^l \bm{C}$. Problem (\ref{centralized loss}) can be solved by Lemma \ref{cal_w}.
	
	\begin{myLemma} 
		\label{cal_w}
		Given a feature matrix $\bm X$ and a label matrix $\bm Y$, the problem 
		$\min \limits_{\bm{W}} ||\bm{W}^T\bm{X}-\bm{Y}||_F^2+\lambda ||\bm{W}||_F^2$
		can be solved by $\bm{W}= (\bm{X}\bm{X}^T+\lambda\bm{I}_d)^{-1} \bm{X}\bm{Y}^T$.
	\end{myLemma}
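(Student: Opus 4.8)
The plan is to treat the objective as a smooth, strictly convex function of $\bm{W}$ and to identify its unique stationary point in closed form. Write $f(\bm{W}) = \|\bm{W}^T\bm{X} - \bm{Y}\|_F^2 + \lambda\|\bm{W}\|_F^2$. First I would expand the Frobenius norms through the identity $\|\bm{A}\|_F^2 = {\rm Tr}(\bm{A}^T\bm{A})$, which, after using the cyclic invariance of the trace, gives $f(\bm{W}) = {\rm Tr}(\bm{W}^T\bm{X}\bm{X}^T\bm{W}) - 2\,{\rm Tr}(\bm{Y}^T\bm{W}^T\bm{X}) + {\rm Tr}(\bm{Y}^T\bm{Y}) + \lambda\,{\rm Tr}(\bm{W}^T\bm{W})$, i.e.\ a sum of terms that are quadratic, linear, and constant in $\bm{W}$.

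Next I would differentiate with respect to $\bm{W}$ using the standard matrix-calculus rules $\partial\,{\rm Tr}(\bm{W}^T\bm{A}\bm{W})/\partial\bm{W} = (\bm{A}+\bm{A}^T)\bm{W}$ and $\partial\,{\rm Tr}(\bm{B}\bm{W}^T)/\partial\bm{W} = \bm{B}$ (applied with $\bm{B} = \bm{X}\bm{Y}^T$ after rewriting ${\rm Tr}(\bm{Y}^T\bm{W}^T\bm{X}) = {\rm Tr}(\bm{X}\bm{Y}^T\bm{W}^T)$). Since $\bm{X}\bm{X}^T$ is symmetric, this yields $\nabla f(\bm{W}) = 2(\bm{X}\bm{X}^T + \lambda\bm{I}_d)\bm{W} - 2\bm{X}\bm{Y}^T$. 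Setting $\nabla f(\bm{W}) = \bm{0}$ gives the normal equation $(\bm{X}\bm{X}^T + \lambda\bm{I}_d)\bm{W} = \bm{X}\bm{Y}^T$; because $\bm{X}\bm{X}^T$ is positive semidefinite and $\lambda > 0$, the matrix $\bm{X}\bm{X}^T + \lambda\bm{I}_d$ is positive definite and hence invertible, so left-multiplying by its inverse produces $\bm{W} = (\bm{X}\bm{X}^T + \lambda\bm{I}_d)^{-1}\bm{X}\bm{Y}^T$, which is exactly the claimed formula.

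Finally I would confirm that this stationary point is the global minimizer: $f$ is the sum of the convex quadratic $\|\bm{W}^T\bm{X} - \bm{Y}\|_F^2$ and the strictly convex term $\lambda\|\bm{W}\|_F^2$, so $f$ is strictly convex on $R^{d\times c}$ and its unique critical point is its unique global minimum. There is no serious obstacle here; the only points requiring care are the bookkeeping in the matrix differentiation and the remark that the regularizer is what guarantees invertibility of $\bm{X}\bm{X}^T + \lambda\bm{I}_d$ even when $\bm{X}$ does not have full row rank. A fully derivative-free alternative would be to complete the square, writing $f(\bm{W}) = {\rm Tr}\big((\bm{W}-\bm{W}^\star)^T(\bm{X}\bm{X}^T+\lambda\bm{I}_d)(\bm{W}-\bm{W}^\star)\big) + \text{const}$ with $\bm{W}^\star = (\bm{X}\bm{X}^T+\lambda\bm{I}_d)^{-1}\bm{X}\bm{Y}^T$, from which the minimizer can be read off directly since the quadratic form is positive definite.
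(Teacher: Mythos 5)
Your proof is correct and follows essentially the same route as the paper's: expand the objective via traces, take the matrix derivative, set it to zero to obtain the normal equation $(\bm{X}\bm{X}^T+\lambda\bm{I}_d)\bm{W}=\bm{X}\bm{Y}^T$, and invert. In fact your version is slightly more careful than the paper's, which contains a typo writing $\bm{X}\bm{Y}^T+\lambda\bm{I}_d$ where $\bm{X}\bm{X}^T+\lambda\bm{I}_d$ is meant, and which omits your (correct) justification that positive definiteness from the regularizer guarantees invertibility and that strict convexity makes the stationary point the global minimizer.
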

	
	By stacking the novel layer Eq. (\ref{regression forward}), the model successfully extracts the deep feature embedded with the original data information via ridge regression. Moreover, Lemma \ref{cal_b} and Lemma \ref{cal_w} are utilized to obtain the analytic solution of the parameters without gradient. 
	
	\textbf{Low rank backward approximation:} Aiming to propagate the label information from the output layer to the input layer, the low rank reconstruction is employed during the backward optimization. Taking the $l$-th layer as an example,  the problem can be reformulated as
	\begin{equation}
		\label{label backward}
		\min \limits_{\bm{H}^{l-1}} ||(\bm{W}^l)^T\bm{H}^{l-1}+\bm{b}^l\bm{1}_n^T-\tilde{\bm{Y}}^l||_F^2,
	\end{equation}
	where $\tilde{\bm{Y}}^l$ is the label matrix in $l$-th layer. Since the analytical results of parameters can be directly obtained via the above lemmas, Eq. (\ref{label backward}) is equal to
	\begin{equation}
		\label{label transform}
		\bm{W}^l(\bm{W}^l)^T\bm{H}^{l-1} = \bm{W}^l (\tilde{\bm{Y}}^l - \bm{b}^l\bm{1}_n^T).
	\end{equation}
	The left of Eq. (\ref{label transform}) shows that the input feature firstly is mapped into a  $c$-dimensional low rank subspace and reconstructed into a $d$-dimensional space. This procedure can be viewed as 
	\begin{equation}
		\label{low rank reconstruct}
		\min \limits_{rank(\bm \bar{\bm H}) \leq c} ||\bm{H}^{l-1}-{\bar{\bm H}}||_F^2 
		=\min \limits_{\bm{V},\bm{W}^T\bm{W}=\bm{I}_c} ||\bm{H}^{l-1}-\bm{W}\bm{V}^T||_F^2, 
	\end{equation}
	where $\bar{\bm H}$ is the reconstructed matrix of the input feature. $\bm{W}\in R^{d\times c}$ is employed as a base vector matrix and $\bar{\bm H}$ can be decomposed as $\bar{\bm H}=\bm{W}\bm{V}^T$. Then, the matrix $\bm{V}$ is easily solved via taking the derivative of Eq. (\ref{low rank reconstruct}) w.r.t $\bm{V}$ and setting it to 0. The result is like $\bm{V}=\bm{X}^T\bm{W}$. Therefore,  $\bm{H}^{l-1}$ is an approximate solution of $\bm{W}^l{\bm{W}^l}^T\bm{H}^{l-1} $ based on low rank reconstruction and the label backward can be defined like
	\begin{equation}
		\label{label reconstruct}
		\bar{\bm H}^{l-1}=\bm{W}^l (\tilde{\bm{Y}}^l - \bm{b}^l\bm{1}_n^T).
	\end{equation}
	In order to promote the stability of the model, Softmax is introduced to map the reconstructed label matrix $\tilde{\bm{Y}}^{l-1}=\bar{\bm H}^{l-1}$ into $(0,1)$. The whole network works as Algorithm \ref{neural network}.
	
	\begin{algorithm}[t]
		\caption{Reconstruct Neural Network via Ridge Regression}
		\label{neural network}
		\begin{algorithmic}[1] 
			\REQUIRE data matrix $\bm{X}$, (i.e., $\bm{H}^0$), one-hot label matrix $\bm{Y}$.\\
			\STATE Initialize $\bm{W}^{l}$ and $\bm{b}^{l}$ of each layer;
			\WHILE{$not \; Convergence$}
			\STATE Forward inference via Eq. (\ref{regression forward});
			\FOR{$l=t$ to $1$}
			\STATE Optimize the loss Eq. (\ref{reg_regular}) of $l$-th layer via Lemma \ref{cal_b} and Lemma \ref{cal_w};
			\STATE Propagate the label information to the $l$-$1$-th layer via Eq. (\ref{label reconstruct});
			\ENDFOR 
			\ENDWHILE
		\end{algorithmic} 
	\end{algorithm}

	\subsection{Decision Layer with Flexible Stiefel Manifold}
	Although softmax can predict the label according to the value of features and has been widely utilized as a decision layer for classification, it ignores the latent distribution of the data and is weak in interpretation. As is known to us, the support vector machine (SVM) utilizes the distribution of data in label space to find the optimal hyper-planes whose marges generated from the support vector are maximum. Therefore, the multi-class SVM is employed as the decision layer for classification. Inspired by \cite{9390382}, Eq. (\ref{OvR SVM}) can be reformulated as
	\begin{equation}
		\label{multi_class SVM}
		\min \limits_{\bm{W},\bm{b},\bm{M} \geq 0} \sum \limits_{i=1}^n \underbrace{||\bm{W}^T\bm{x}_i+\bm{b}-\bm{y}_i-\bm{y}_i \odot \bm{m}_i||_2^2}_{f_i}+\lambda \| \bm W\|_F^2,
	\end{equation}
	where $\bm{M}=[\bm{m}_1,\bm{m}_2,...,\bm{m}_n]\in R^{c \times n}$ is a slack variable to encode the loss of each $\bm{x}_i \in R^{d \times 1}$. In the label space, because some nodes may be contaminated by noises or far away from the hyper-lines, these $\bm{f}_i$ will be much large, which leads to the model mainly optimize this part of loss and disturbs the performance sharply. Therefore, the adaptive weight vector $\bm{\alpha}\in R^{n \times 1}$ is introduced into the decision layer to enable the model to pay more attention to good nodes for example support vector with $\bm{f}_i=0$. It can be formulated as
	\begin{equation}
		\label{adaptive weigt svm}
		\min \limits_{\bm{\alpha}^T \bm{1}_n=1,\bm{W},\bm{b},\bm{M} \geq 0} \sum \limits_{i=1}^n \alpha_i f_i +\lambda \|\bm W\|_F^2+\gamma ||\bm{\alpha}||_2^2,
	\end{equation}
	where $\gamma$ is a trade-off coefficient and can control the sparsity of $\bm{\alpha}$. We can obtain the closed-form results of this weight vector via Theorem \ref{cal_alpha}.
	
	\begin{myTheo} 
		\label{cal_alpha}
		Suppose that $f_{(1)} \leq f_{(2)} \leq ... \leq f_{(n)}$. If $\gamma=\frac{n-1}{2}f_{(n)}-\frac{1}{2} \sum_{i=1}^{n-1}f_{(i)}$, the optimal $\bm{\alpha}$ is 
		\begin{equation}
			\label{cal_a}
			\alpha_i = (\frac{f_{(n)} - f_{(n-1)}}{(n-1) f_{(n)} - \sum \limits_{j=1}^k f_{(j)}})_+ .
		\end{equation}
	\end{myTheo}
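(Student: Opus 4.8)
The plan is to observe that Eq.~(\ref{adaptive weigt svm}), viewed as a function of $\bm{\alpha}$ with $\bm{W},\bm{b},\bm{M}$ (hence the scalars $f_1,\dots,f_n\ge 0$) held fixed, is separable up to the coupling constraint: after discarding the $\bm{\alpha}$-independent term $\lambda\|\bm W\|_F^2$ it becomes
\begin{equation}
	\min_{\bm{\alpha}\ge 0,\ \bm{\alpha}^T\bm 1_n = 1}\ \sum_{i=1}^n \alpha_i f_i + \gamma\|\bm{\alpha}\|_2^2 ,
\end{equation}
where nonnegativity is implicit for the adaptive weights. With the prescribed $\gamma=\tfrac{n-1}{2}f_{(n)}-\tfrac12\sum_{i=1}^{n-1}f_{(i)}=\tfrac12\sum_{i=1}^{n-1}(f_{(n)}-f_{(i)})\ge 0$ this is a convex QP, so any KKT point is the global minimizer. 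First I would form the Lagrangian $L=\sum_i\alpha_i f_i+\gamma\sum_i\alpha_i^2-\eta(\sum_i\alpha_i-1)-\sum_i\beta_i\alpha_i$ with $\bm\beta\ge 0$; stationarity gives $2\gamma\alpha_i=\eta-f_i+\beta_i$, and combining with complementary slackness $\beta_i\alpha_i=0$ yields the thresholded form $\alpha_i=\big((\eta-f_i)/(2\gamma)\big)_+$, i.e.\ $\alpha_i>0$ exactly when $f_i<\eta$, with $\eta$ the unique scalar making $\sum_i\alpha_i=1$.

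Second, I would identify $\eta$. Summing $\alpha_i=(\eta-f_i)/(2\gamma)$ over the active set $S=\{i:f_i<\eta\}$ gives $\eta=(2\gamma+\sum_{i\in S}f_i)/|S|$. I would then check that the guess $S=\{(1),\dots,(n-1)\}$ (the indices of the $n-1$ smallest $f$-values) is self-consistent: substituting $2\gamma=(n-1)f_{(n)}-\sum_{i=1}^{n-1}f_{(i)}$ collapses this to $\eta=f_{(n)}$, after which $f_{(i)}<f_{(n)}=\eta$ for $i\le n-1$ (assuming $f_{(n-1)}<f_{(n)}$) while $f_{(n)}\not<\eta$, so the assumed $S$ is indeed the active set and the induced multipliers $\beta_i=f_i-\eta\ge 0$ on $S^c$ are dual feasible. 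Back-substituting $\eta=f_{(n)}$ into $\alpha_i=\big((\eta-f_i)/(2\gamma)\big)_+$ with $2\gamma=(n-1)f_{(n)}-\sum_{j=1}^{n-1}f_{(j)}$ then reproduces Eq.~(\ref{cal_a}), with $k=n-1$ the resulting number of positive weights.

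The step I expect to be the main obstacle is precisely this active-set / self-consistency argument: one cannot merely write down $\alpha_i=\big((\eta-f_i)/(2\gamma)\big)_+$ and declare the support, but must verify that the $\eta$ forced by the normalization is compatible with the assumed $S$, and must handle the degenerate cases where ties at the top ($f_{(n-1)}=f_{(n)}$) shrink the support, or where all $f_i$ coincide so that $\gamma=0$, the objective is linear, and the minimizer sits at a vertex of the simplex. Apart from that, convexity (the Hessian is $2\gamma\bm I\succeq 0$) makes the exhibited KKT point automatically globally optimal, so no further second-order or uniqueness analysis is required.
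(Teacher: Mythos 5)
Your proof follows essentially the same route as the paper's: both reduce the problem to the simplex-constrained quadratic program in $\bm{\alpha}$, obtain the thresholded form $\alpha_i=\big((\eta-f_i)/(2\gamma)\big)_+$ from the KKT conditions, determine the multiplier from the normalization over the active set, and specialize to a support of size $n-1$ with the prescribed $\gamma$, which forces $\eta=f_{(n)}$. Your extra checks (active-set self-consistency, dual feasibility, convexity from $\gamma\ge 0$ guaranteeing global optimality, and the degenerate tie cases) are sound and in fact more careful than the paper's argument; note also that your derivation correctly produces $f_{(i)}$ in the numerator of the displayed formula for $\alpha_i$, so the $f_{(n-1)}$ appearing in the theorem statement (and the undefined index $k$ in its denominator, which should be $n-1$) are evidently typos.
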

	
	Inspired by the successful application of manifold learning in representation learning, we employ the flexible Stiefel Manifold \cite{9134971} formulated in Definition \ref{stiefel} to explore the inner distribution of the data. 
	\begin{myDef}(Flexible Stiefel Manifold)
		\label{stiefel}
		For a matrix $\bm{A} \in R^{a \times b}$, $\bm A$ obeys flexible Stiefel manifold if $\bm A \in \left\{ \bm{B} \in R^{a \times b} | \bm{B}\bm{D}\bm{B}^T+\bm{E}=\bm{I}_a, \bm{D} \geq 0 \right\}$, where $\bm E$ is a residual matrix.
	\end{myDef}
	
	According to this definition, we assume that the predicted label vectors exist in a flexible Stiefel manifold space. To keep the discussion convenient, we introduce the weighted-centralized matrix $\hat{\bm{C}}=\bm{I}_n-\frac{1}{\bm{1}_n^T \bm{D} \bm{1}_n}\bm{D}\bm{1}_n\bm{1}_n^T$ and $\bm{D}={\rm diag}(\bm{\alpha})$. Therefore, the label manifold space can be defined as
	\begin{equation}
		\label{flexible manifold}
		\bm{W}^T\bm{X}\hat{\bm{C}}\bm{D}\hat{\bm{C}}^T\bm{X}^T\bm{W}+\bm{E}=\bm{I}_c.
	\end{equation}
	Owing to $\bm{E}$, this flexible manifold can fit plenty of latent irregular manifold structures. Generally, the error matrix is assumed to satisfy $\bm{E}=\hat{\lambda} \bm{W}^T \bm{W}$. Then, the energy can be defined with $\| \bm E \|=\hat{\lambda}tr(\bm{W}^T \bm{W})=\hat{\lambda}\|\bm W\|_F^2$. To avoid introducing new parameters, we utilize the $\lambda$ to substitute for $\lambda+\hat{\lambda}$ with a hyper-parameter trick. Therefore, the problem can be reformulated as
	\begin{equation}
		\label{decision function}
		\begin{split}
			&\min \limits_{\bm{\alpha}, \bm{M},\bm{W},\bm{b}} \sum \limits_{i=1}^n \alpha_i ||(\bm{W}^t)^T\bm{h}_i^{t-1}+\bm{b}-\bm{y}_i-\bm{y}_i \odot \bm{m}_i||_2^2 +\lambda \|\bm W\|_F^2 +\gamma ||\bm{\alpha}||_2^2\\
			&~~~~~~~~s.t.\  \bm{W}^T(\bm{X}\hat{\bm{C}}\bm{D}\hat{\bm{C}}^T\bm{X}^T+\lambda \bm{I}_d)\bm{W}=\bm{I}_c,\bm{\alpha}^T \bm{1}_n=1, \bm{\alpha} \geq 0, \bm{M} \geq 0.
		\end{split}
	\end{equation}
	
	By embedded the flexible manifold, Eq. (\ref{decision function}) successfully utilizes the inner distribution of the data for classification. The following Theorem \ref{solve manifold} can provide the closed-form results for this problem.
	
	\begin{myTheo}
		\label{solve manifold}
		Suppose that $\bm{\alpha}$ is a constant and let $\bm{G}=\bm{Y}-\bm{Y}\odot \bm{M}$. $\bm W$ and $\bm b$ can be solved by
		\begin{equation}
			\label{W_b}
			\bm{W} = \bm{S}^{-1} \bm{U} \bm{\Lambda}^T \bm{V}^T \quad and \quad
			\bm{b} = \frac{\bm{G}\bm{D}\bm{1}_n-\bm{W}^T\bm{X}\bm{D}\bm{1}_n}{\bm{1}_n^T\bm{D}\bm{1}_n}
		\end{equation}
		where $\bm{S}=(\bm{X}\hat{\bm{D}}\bm{X}^T+\lambda \bm{I}_d)^{\frac{1}{2}}$, $\hat{\bm{D}}=\hat{\bm{C}}\bm{D}\hat{\bm{C}}^T$ and $\bm{\Lambda}=[\bm{I}_c, \bm{0}]\in R^{c\times d}$. $\bm{U},\bm{V}$ are the left and right singular of $\bm{S}^{-1}\bm{X}\hat{\bm{D}}\bm{G}$, respectively. Having obtained the solution of $\bm W, \bm b$, slack variable $\bm M$ can be solved by
		\begin{equation}
			\label{cal_M}
			\bm{m}_i=(\bm{y}_i \odot (\bm{W}^T \bm{x}_i+\bm{b})-\bm{1}_c)_+.
		\end{equation}
	\end{myTheo}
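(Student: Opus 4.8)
The plan is to solve the problem block by block with $\bm\alpha$ held fixed, so that $\gamma\|\bm\alpha\|_2^2$ is constant and $\bm D={\rm diag}(\bm\alpha)$, $\hat{\bm C}$, $\hat{\bm D}=\hat{\bm C}\bm D\hat{\bm C}^T$ and $\bm S=(\bm X\hat{\bm D}\bm X^T+\lambda\bm I_d)^{1/2}$ are fixed matrices (I write $\bm X$ for the decision-layer input $\bm H^{t-1}$; note $\bm S$ is well defined and invertible because $\bm D\succeq 0\Rightarrow\bm X\hat{\bm D}\bm X^T\succeq 0$ while $\lambda>0$). The claimed formulas are the per-block minimizers in an alternating scheme, so I would establish each of them in turn.

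\emph{Step 1 (eliminate $\bm b$).} With $\bm W,\bm M$ frozen the objective is an unconstrained convex quadratic in $\bm b$; writing $\bm G=\bm Y-\bm Y\odot\bm M$ so that the stacked residual is $\bm W^T\bm X+\bm b\bm 1_n^T-\bm G$, zeroing the gradient gives $(\bm 1_n^T\bm D\bm 1_n)\bm b=(\bm G-\bm W^T\bm X)\bm D\bm 1_n$, which is the stated $\bm b$ (a weighted version of Lemma \ref{cal_b}). Substituting it back, the residual factors as $\bm W^T\bm X+\bm b\bm 1_n^T-\bm G=(\bm W^T\bm X-\bm G)\hat{\bm C}$, i.e. $\bm b\bm 1_n^T$ acts exactly as the weighted column-centering $\hat{\bm C}$ on the right, so the data term collapses to ${\rm Tr}\big((\bm W^T\bm X-\bm G)\hat{\bm C}\bm D\hat{\bm C}^T(\bm W^T\bm X-\bm G)^T\big)={\rm Tr}\big((\bm W^T\bm X-\bm G)\hat{\bm D}(\bm W^T\bm X-\bm G)^T\big)$.

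\emph{Step 2 (solve $\bm W$).} Expanding this together with $\lambda\|\bm W\|_F^2$ gives ${\rm Tr}\big(\bm W^T(\bm X\hat{\bm D}\bm X^T+\lambda\bm I_d)\bm W\big)-2\,{\rm Tr}(\bm W^T\bm X\hat{\bm D}\bm G^T)+\mathrm{const}$. The constraint in Eq. (\ref{decision function}) is precisely $\bm W^T\bm S^2\bm W=\bm I_c$, so the first trace equals ${\rm Tr}(\bm I_c)=c$ on the feasible set and the problem reduces to $\max_{\bm W^T\bm S^2\bm W=\bm I_c}{\rm Tr}(\bm W^T\bm X\hat{\bm D}\bm G^T)$. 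I would whiten via $\bm P=\bm S\bm W$, turning the constraint into $\bm P^T\bm P=\bm I_c$ and the objective into ${\rm Tr}(\bm P^T\bm A)$ with $\bm A=\bm S^{-1}\bm X\hat{\bm D}\bm G^T$ (the matrix named in the theorem, up to the obvious transpose on $\bm G\in R^{c\times n}$). Taking the SVD $\bm A=\bm U\bm\Sigma\bm V^T$ and setting $\bm N=\bm U^T\bm P\bm V$, which again has orthonormal columns, one gets ${\rm Tr}(\bm P^T\bm A)={\rm Tr}(\bm N^T\bm\Sigma)=\sum_i\sigma_i N_{ii}\le\sum_i\sigma_i$ because $|N_{ii}|\le\|\bm n_i\|_2=1$, with equality at $\bm N=\bm\Lambda^T$, i.e. $\bm P=\bm U\bm\Lambda^T\bm V^T$ and hence $\bm W=\bm S^{-1}\bm U\bm\Lambda^T\bm V^T$; feasibility is confirmed by $\bm W^T\bm S^2\bm W=\bm V\bm\Lambda\bm U^T\bm U\bm\Lambda^T\bm V^T=\bm V\bm\Lambda\bm\Lambda^T\bm V^T=\bm I_c$. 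Plugging this $\bm W$ into the Step-1 formula for $\bm b$ yields the stated $\bm b$.

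\emph{Step 3 (solve $\bm M$).} With $\bm W,\bm b$ fixed the remaining problem is $\min_{\bm M\ge0}\sum_i\alpha_i\|\bm r_i-\bm y_i\odot\bm m_i\|_2^2$ with $\bm r_i=\bm W^T\bm x_i+\bm b-\bm y_i$ and $\alpha_i\ge0$, which separates over every coordinate; since $y_{ij}\in\{-1,+1\}$, minimizing $(r_{ij}-y_{ij}m_{ij})^2$ over $m_{ij}\ge0$ gives $m_{ij}=(y_{ij}r_{ij})_+=\big(y_{ij}(\bm W^T\bm x_i+\bm b)_j-1\big)_+$, which is Eq. (\ref{cal_M}). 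I expect Step 2 to be the main obstacle — namely recognizing that the orthogonality constraint turns the leading quadratic into a constant, and proving tightness of the trace bound at $\bm\Lambda^T$ through the whitening change of variables and the Procrustes-type argument — whereas Steps 1 and 3 are routine quadratic minimizations.
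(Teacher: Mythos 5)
Your proposal is correct and follows essentially the same route as the paper's proof: eliminating $\bm b$ as a weighted mean so the residual collapses to $(\bm W^T\bm X-\bm G)\hat{\bm C}$, reducing the $\bm W$-step to a trace maximization under $\bm W^T\bm S^2\bm W=\bm I_c$ solved by whitening and an SVD/Procrustes bound (which the paper isolates as its Lemma 3), and obtaining $\bm M$ as a coordinatewise nonnegative projection (the paper reaches the same formula via KKT conditions). The only differences are cosmetic — ordering of the blocks, proving the trace bound inline, and your explicit feasibility check — so no further comparison is needed.
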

	
	\begin{algorithm}
		\caption{Manifold Neural Network}
		\label{Whole Network}
		\begin{algorithmic}[1] 
			\REQUIRE data matrix $\bm{X}$, (i.e., $\bm{H}^0$), one-hot label matrix $\bm{Y}$, trade-pff parameters $\lambda$ and $\gamma$.\\
			\STATE Initialize $\bm{W}^{l}$ and $\bm{b}^{l}$ of each layer;
			\WHILE{$not \; Convergence$}
			\STATE Forward inference via Eq. (\ref{manifold network});
			\STATE Optimize the $\bm \alpha$, $\bm{M}$, $\bm{W}^t$ and $\bm{b}^t$ via Theorem \ref{cal_alpha} and Theorem \ref{solve manifold};
			\FOR{$l=L-1$ to $1$}
			\STATE Optimize the loss Eq. (\ref{reg_regular}) of $l$-th layer via Lemma \ref{cal_b} and Lemma \ref{cal_w};
			\STATE Propagate the label information to the $l-1$-th layer via Eq. (\ref{label reconstruct});
			\ENDFOR 
			\ENDWHILE
		\end{algorithmic} 	
	\end{algorithm}
	
	\subsection{A Novel Manifold Neural Network}
	Although Eq. (\ref{decision function}) not only has strong interpretability but also explores the latent distribution of the data with flexible Stiefel manifold, it can not has good performance when facing non-linear problems. Aiming to solve this problem, we use the reconstructed neural network proposed in Eq.  (\ref{regression forward}) to extract the feature and employ  Eq.  (\ref{decision function}) to learn the latent manifold of the embedding for classification. Then, we finally put forward a novel manifold neural network like
	\begin{equation}
		\label{manifold network}
		\left\{
		\begin{array}{l}
			\begin{split}
				&\min \limits_{\bm{H}^l} ||\bm{H}^l - \sigma((\bm{W}^l)^T \bm{H}^{l-1} + \bm{b}^l\bm{1}_n^T)||_F^2 \\
				&\min \limits_{\mathcal{F}} \sum \limits_{i=1}^n 
				\alpha_i {||(\bm{W}^t)^T\bm{h}_i^{t-1}+\bm{b}-\bm{y}_i-\bm{y}_i \odot \bm{m}_i||_2^2}
				+\lambda \|\bm{W}^t\|_F^2 +\gamma ||\bm{\alpha}||_2^2, \\
			\end{split}
			
		\end{array}
		\right.
	\end{equation}
	where $\mathcal{F}=\left\{ \bm{W}^t,\bm{b}^t,\bm{\alpha},\bm{M} \right\}$ satisfies the same  constraints shown in Eq. (\ref{decision function}) , $l \in \left\{ 0,1,...,t \right\}$ is the $l$-th layer in manifold neural network and $\bm{y}_i \in R^{c}$ is the one-hot label of the $i$-th node. Moreover, the whole network is jointly optimized via Algorithm \ref{Whole Network}.

	\noindent \textbf{The Merits of the Proposed Model: } 
	Compared with the classic deep neural network, the proposed network obtains closed-form results via ridge regression reconstruction and accelerates convergence. Besides, by embedding the flexible Stiefel manifold into the decision layer, the model successfully learns the distribution of the deep embeddings. Moreover, the multi-class SVM with adaptive weights is unified with the decision layer, which makes the model more explicable. Finally, we propose a novel manifold neural network and design a joint optimization strategy that can directly obtain the closed-form solution of each parameter.

	\subsection{Time Complexity}
	Owing to reconstructing the neural network via ridge regression, the computational complexity is $O(n^2d_l)$. For the decision layer, calculating the $\bm M$ and $\bm \alpha$ need $O(ncd_{t-1})$ and $O(n \log n+n)$. To obtain the $\bm{W}^t$, the total time complexity is $O(nd_{t-1}c+d_{t-1}^2c+d_{t-1}c^2+d_{t-1}^3)$. Then, update the $\bm{b}^t$ takes $O(cn+n+d_{t-1}n+cd_{t-1})$. In sum, the whole model needs $O(T(tn^2d_l+ncd_{t-1}+n \log n+d_{t-1}^2c+d_{t-1}c^2))$, where $t$ is the number of the feature extraction layers and $T$ represents the number of iterations. In practice, since  $n$ is much larger than $d_l$ and $c$, the time complexity can be approximated as $O(Ttn^2)$.
	
	\begin{table*}[bp]
		\renewcommand\arraystretch{1.3}
		\centering
		\caption{Datasets Description}
		\label{table_datasets}
		\scalebox{0.9}{
			\begin{tabular}{ccccccc}
				\toprule
				\textbf{Dataset}       & \textbf{AT\&T} & \textbf{UMIST} & \textbf{WAVEFORM} & \textbf{MNIST-Mini} & \textbf{MNIST} & \textbf{FashionMNIST} \\ \hline
				\textbf{\# of samples} & 400            & 575            & 2746              & 10000               & 70000          & 70000                 \\
				\textbf{Features}      & 1024           & 1024           & 21                & 784                 & 784            & 784                   \\
				\textbf{Classes}       & 40             & 20             & 3                 & 10                  & 10             & 10                    \\ 
				\bottomrule
			\end{tabular}
		}
	\end{table*}
	
	\section{Experiment}
	In this paper, we propose a novel neural network and a non-gradient optimization strategy. Therefore, the experiment mainly verifies the feasibility and performance of the proposed network based on the completely non-gradient strategies. Therefore, we select six comparison methods including DNN. 
	
	\begin{table*}[]
		\renewcommand\arraystretch{1.3}
		\centering
		\caption{Accuracy(\%)  and F1-score(\%) on Benchmark Datasets}
		\label{table_result}
		\scalebox{0.80}{
			\begin{tabular}{cccccccc}
				\toprule
				\multicolumn{2}{c}{}	& \textbf{AT\&T}& \textbf{WAVEFORM}	& \textbf{UMIST}& \textbf{MNIST-MINI}& \textbf{MNIST}	& \textbf{FashionMNIST}\\ \hline
				& \emph{Acc}			& 8.77$\pm$0.41 & 56.98$\pm$0.19	& 20.00$\pm$1.19& 23.58$\pm$0.02	& 23.58$\pm$0.15	& 36.86$\pm$0.12\\
				\multirow{-2}{*}{\textbf{RidgeReg}} 
				& \emph{F1}				& 6.94$\pm$0.22	& 58.11$\pm$0.19	& 14.98$\pm$2.00& 13.81$\pm$1.74	& 16.99$\pm$0.50	& 24.69$\pm$0.07\\
				
				& \emph{Acc}			& 5.83$\pm$1.78	& 57.04$\pm$0.78	& 21.39$\pm$2.44& 23.53$\pm$0.26	& 23.29$\pm$0.22	& 36.85$\pm$0.07\\
				\multirow{-2}{*}{\textbf{LassoReg}} 
				& \emph{F1}				& 4.40$\pm$1.41	& 58.22$\pm$0.81	& 14.77$\pm$1.83& 17.21$\pm$0.51	& 17.75$\pm$1.19	& 25.32$\pm$0.86\\
				
				& \emph{Acc}			& 86.67$\pm$1.93& 72.60$\pm$0.53	& 75.14$\pm$1.83& \underline{81.29$\pm$0.22}	& \textbf{88.71$\pm$0.12} & 79.49$\pm$0.25\\
				\multirow{-2}{*}{\textbf{SmartSVM}}           
				& \emph{F1}				& 87.05$\pm$1.17& 72.38$\pm$0.55	& 72.52$\pm$1.59& \underline{81.05$\pm$0.23}	& \textbf{88.60$\pm$0.12} & 79.49$\pm$0.25\\
				
				& \emph{Acc}			& \underline{90.00$\pm$2.93}& 76.33$\pm$1.23	& \underline{96.88$\pm$0.62}& 75.25$\pm$1.67	& 62.15$\pm$0.11	& 43.46$\pm$0.17       \\
				\multirow{-2}{*}{\textbf{MKVM}}               
				& \emph{F1}				& \underline{92.92$\pm$2.86}& 72.24$\pm$1.64	& \underline{95.84$\pm$0.58}& 74.47$\pm$1.89	& 60.95$\pm$0.12	& 42.18$\pm$0.16       \\
				
				& \emph{Acc}			& 51.67$\pm$8.07& 67.48$\pm$6.41	& 89.02$\pm$0.05& 75.47$\pm$2.69	& 55.34$\pm$0.05	& 50.94$\pm$2.39\\
				\multirow{-2}{*}{\textbf{SVM}}               
				& \emph{F1}				& 56.59$\pm$8.05& 56.59$\pm$10.99	& 87.33$\pm$0.16& 75.56$\pm$2.43	& 54.80$\pm$0.03	& 50.59$\pm$3.61\\
				
				& \emph{Acc}			& 50.89$\pm$1.74& \underline{83.98$\pm$3.96}	& 72.83$\pm$1.89& 75.33$\pm$2.39	& 81.79$\pm$2.69	& \textbf{80.72$\pm$1.35}\\
				\multirow{-2}{*}{\textbf{DNN}}
				& \emph{F1}				& 35.94$\pm$0.11& \underline{83.69$\pm$4.62}	& 58.73$\pm$0.06& 67.05$\pm$0.89	& 68.44$\pm$0.09	& \underline{78.09$\pm$1.11}\\
				
				& \emph{Acc} & \textbf{98.75$\pm$0.06} & \textbf{85.44$\pm$0.01} & \textbf{97.39$\pm$0.11} & \textbf{84.83$\pm$0.36} & {\underline{83.67$\pm$0.35}}       & {\underline{79.54$\pm$0.15} } \\
				\multirow{-2}{*}{\textbf{Ours}}               
				& \emph{F1}  & \textbf{97.94$\pm$0.46} & \textbf{84.12$\pm$0.18} & \textbf{97.09$\pm$0.04} & \textbf{84.95$\pm$0.29} & \underline{ 84.60$\pm$0.39}		& \textbf{80.14$\pm$0.11}                   \\ 
				\bottomrule
			\end{tabular}
		}
	\end{table*}

\begin{figure*}[]
	\centering
	\subfigure[UMIST: DNN]{
		\label{UMIST: DNN}
		\includegraphics[scale=0.155]{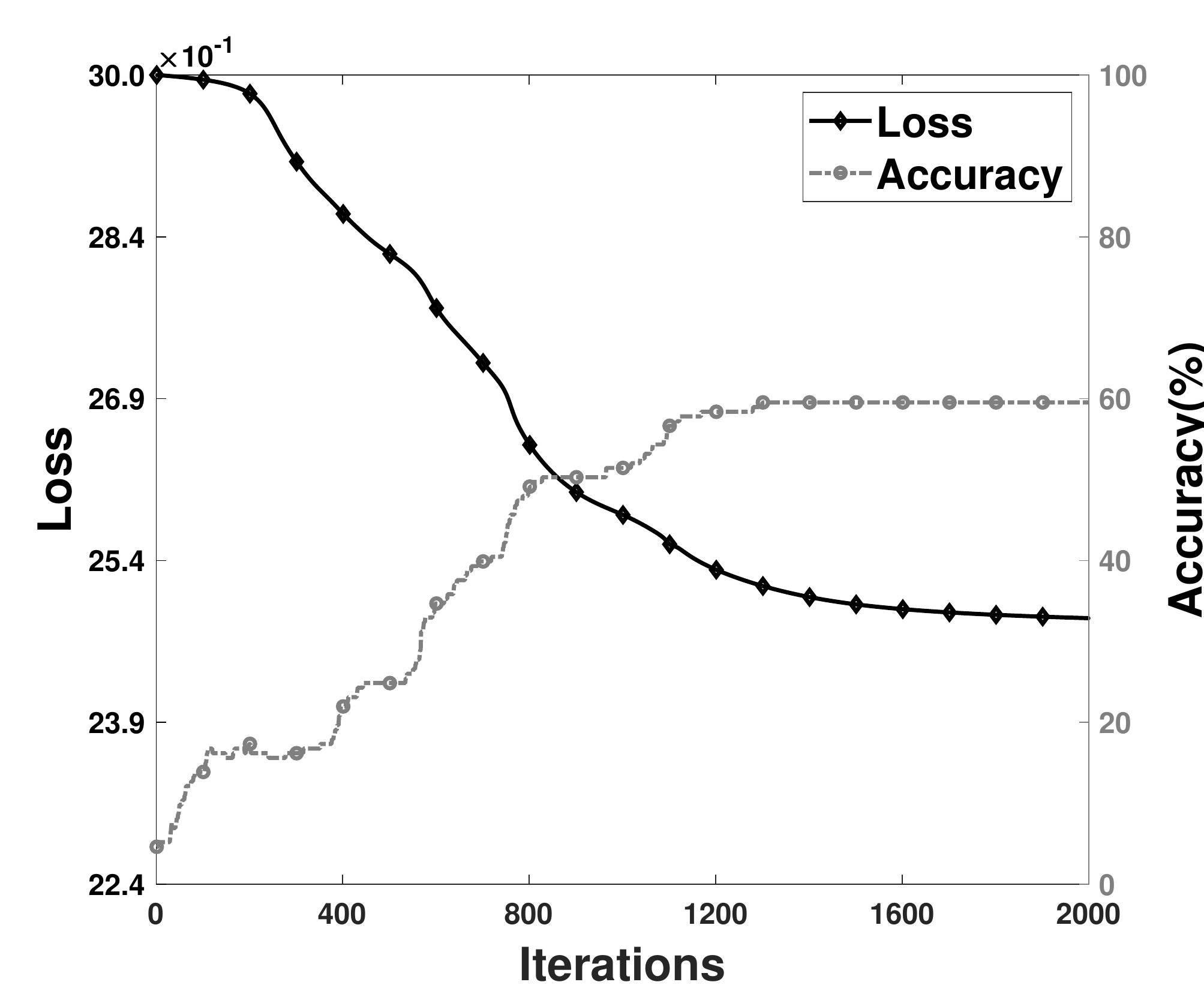}
	}
	\subfigure[UMIST: OURS]{
		\label{UMIST: OURS}
		\includegraphics[scale=0.155]{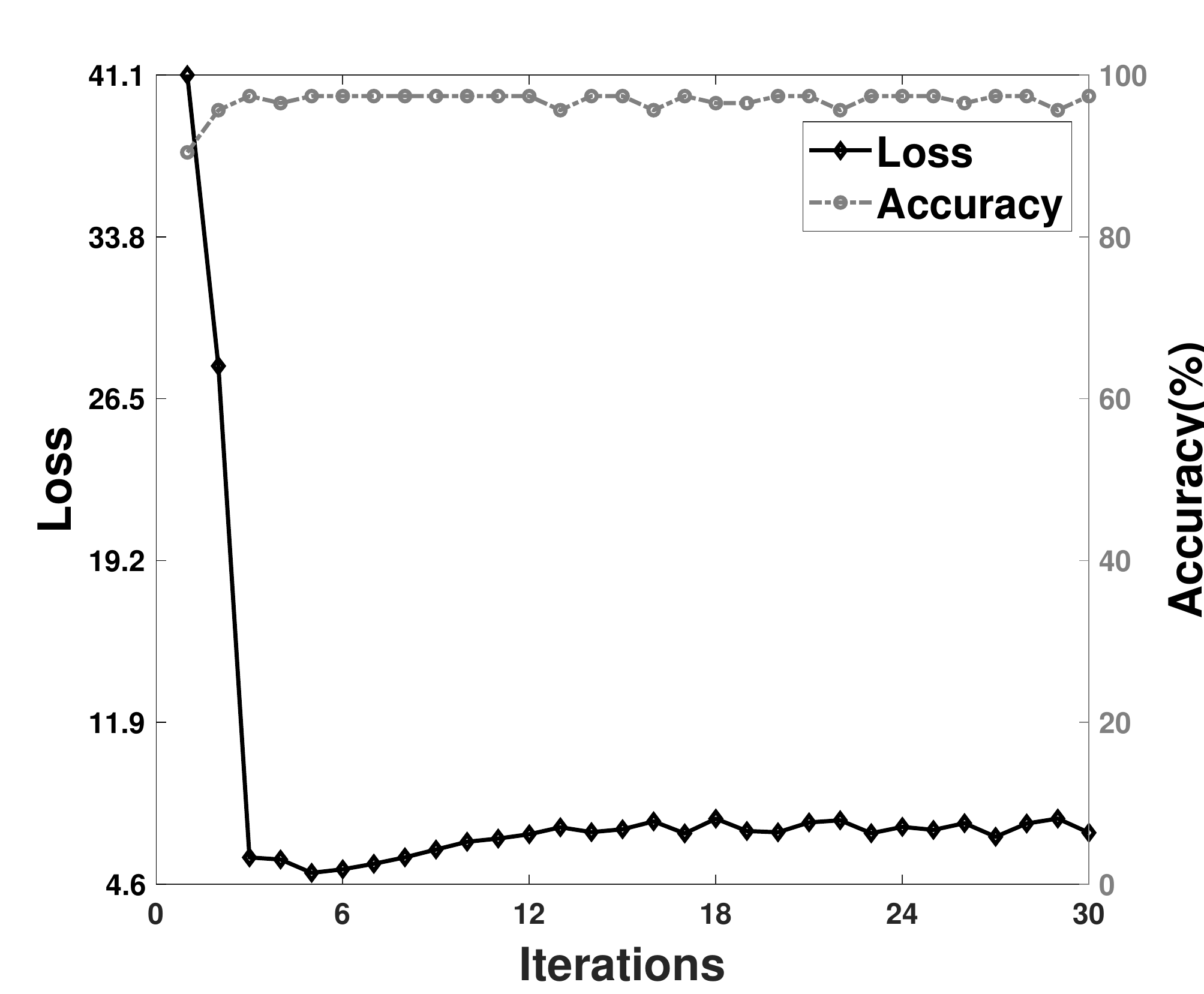}
	}
	\subfigure[MNIST-Mini: DNN]{
		\label{MNIST-Mini: DNN}
		\includegraphics[scale=0.155]{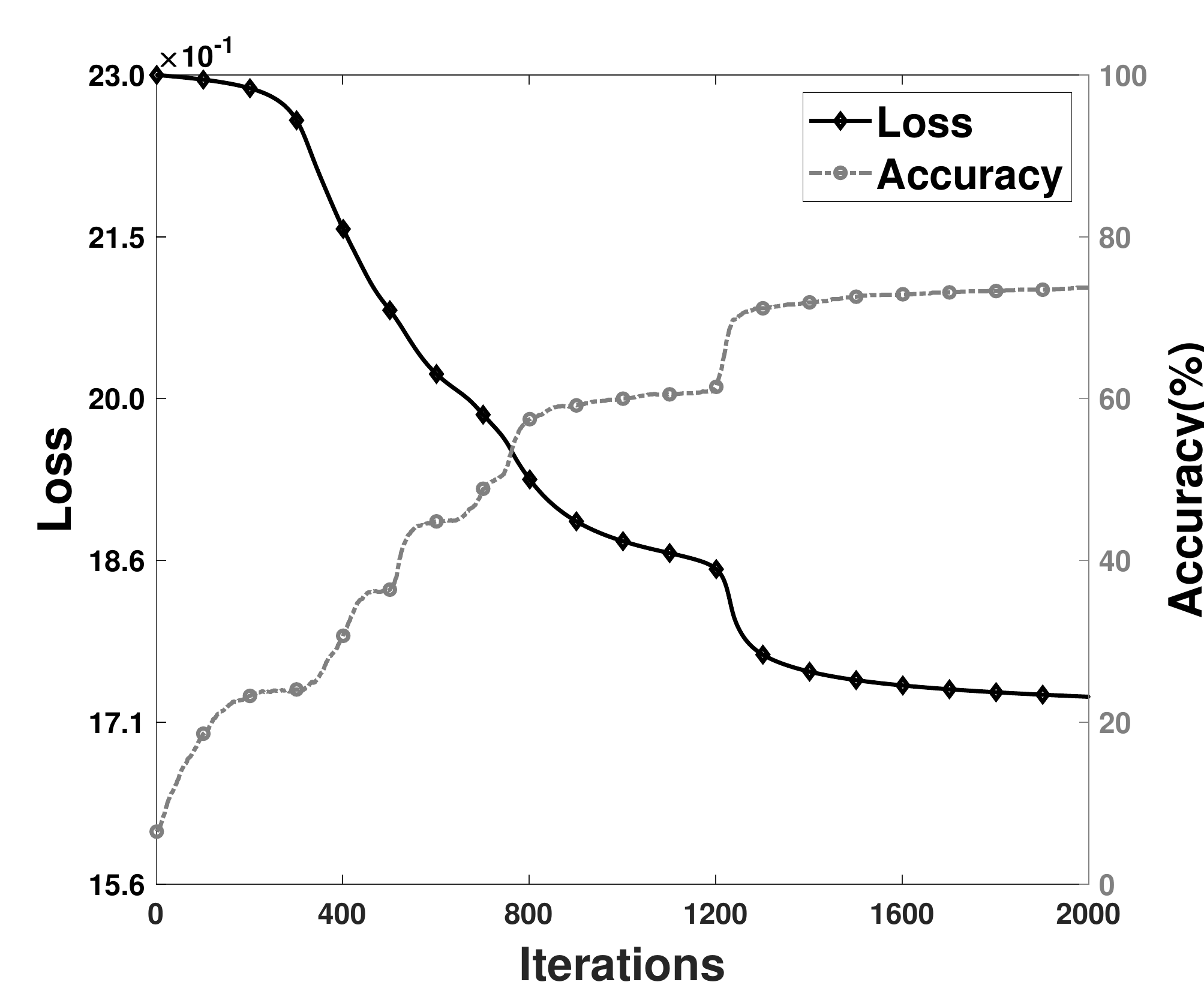}
	}
	\subfigure[MNIST-Mini: OURS]{
		\label{MNIST-Mini: OURS}
		\includegraphics[scale=0.155]{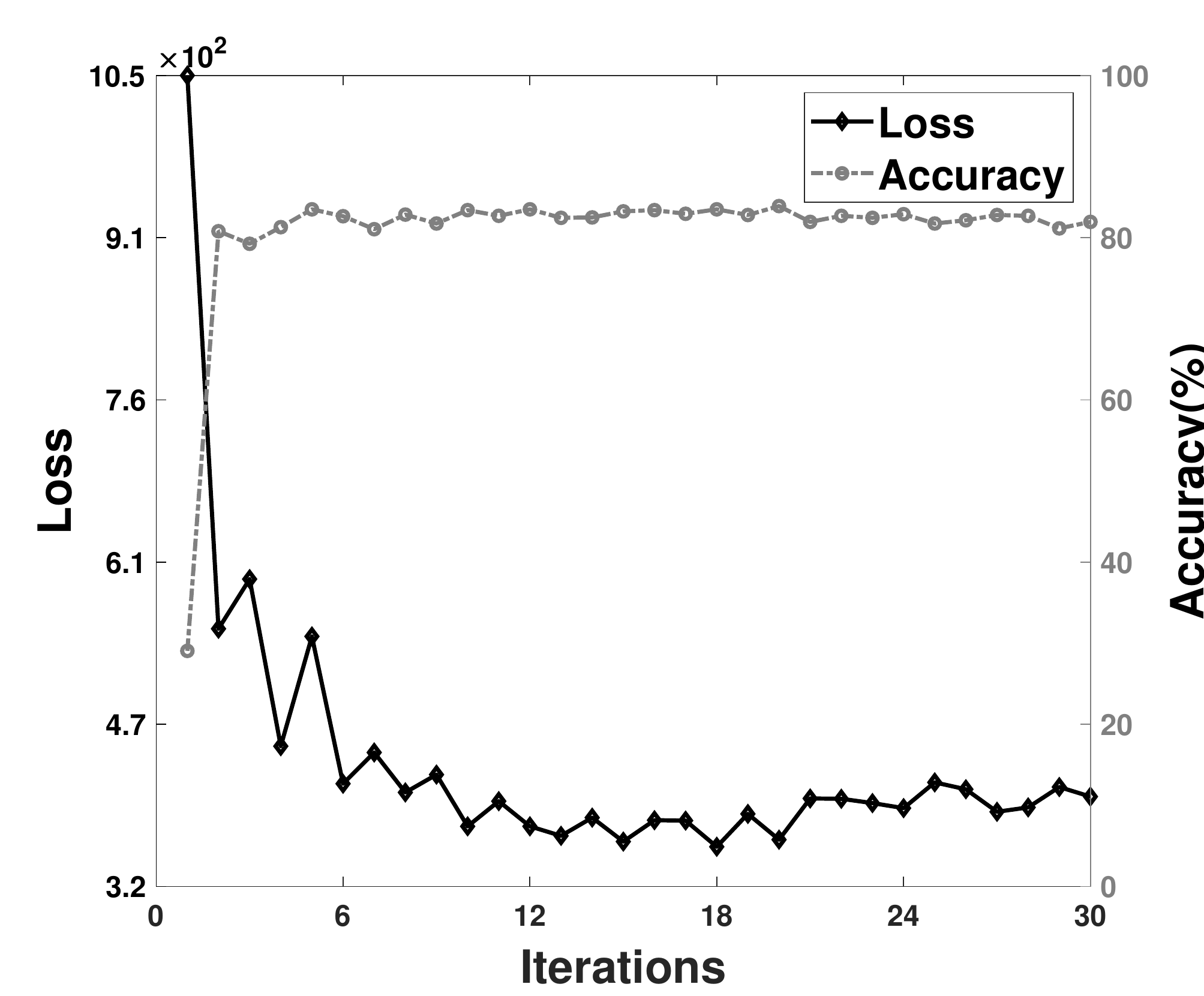}
	}
	\caption{Relationship between loss, accuracy, and iterations for DNN and proposed network on datasets including UMIST, MNIST-Mini. The four figures show the performance and convergence of methods.}
	\label{Convergence comparison}
\end{figure*}

	\subsection{Dataset Description and Experimental Settings} \label{dataset description}

	Six benchmark datasets involved in our experiments are AT\&T\cite{341300}, WAVEFORM\cite{2013UCI}, UMIST\cite{6565365}, MNIST-MINI, MNIST\cite{726791}, and FashionMNIST\cite{xiao2017fashionmnist}. The detail of these datasets is listed in Table \ref{table_datasets}. We compare the proposed network with six classic methods including Ridge Regression ($RidgeReg$), Lasso Regression ($LassoReg$), the Hierarchical Multiclass Support Vector Machine ($SmartSVM$) \cite{2017Fast}, the Multiclass Kernel-based Vector Machine ($MKVM$) \cite{Crammer2002On}, SVM with OVR ($SVM$) and Deep Neural Network ($DNN$).

	Before classification, all benchmark datasets are accordingly (row-)normalized to the range of $[0,1]$. Mnist and FashionMnist are used $60000$ data as the training data and $10000$ data as the testing data. Except for these two datasets, the other datasets are split into a training dataset ($80\%$) and a testing dataset ($20\%$). Because several methods involve trade-off coefficients, we choose values of these parameters from $\left\{ 2^{-3}, 2^{-2}, 2^{-1}, 2^0, 2^1, 2^2, 2^3 \right\}$ and record the best results. The gradient descent is utilized to optimize the DNN. The learning rate is $0.01$ and the iteration is $2000$. The iteration of our model is 30. Apart from that, the proposed model and DNN use the same structure. For AT\&T, two layers are used and the hidden units are 64. The other five datasets adopt 3 layers. Among them, the hidden units in WAVEFORM are 10 and 4. The hidden units in UMIST are 64 and 32. For MNIST and FashionMinst, the hidden units are 32 and 16. Accuracy and F1-score are employed to evaluate these different methods.
	
	\subsection{Analysis of Experiments}
	Aiming to evaluate the performance of the proposed model more accurately, the models mentioned above are run 10 times on the datasets. The results including mean value and standard deviation are shown in Table \ref{table_result}. Based on this, we can conclude that
	
	1) The proposed network achieves great performance on all benchmark datasets. It obtains the top accuracy and F1-score of classification on the first four datasets. On the Mnist, it also achieves the second-highest score. 
	
	2) When the number of categories is large, ours still has a good ability to predict the label accurately especially on AT\&T and UMIST.
	
	3) With the help of a more reasonable and efficient decision layer, the proposed novel network compared with DNN has a better performance.
	
	Furthermore, aiming to study the convergence and performance of the proposed network, we compare the loss and accuracy of DNN and ours with the number of iterations. The results are shown in Fig \ref{Convergence comparison}. The proposed network can converge within 5 iterations regardless of the size of the datasets. On the contrary, it is almost necessary to optimize the DNN by more than 1000 iterations to converge. Based on this, we can infer that the designed non-gradient strategy is more efficient than gradient descent. Besides, by embedded the flexible Stiefel manifold and utilized the adaptive weights,  the proposed network successfully explores the inner distribution of the data and obtains a higher accuracy.



	\subsection{Sensitivity Analysis w.r.t. Parameter $\lambda$}
	
	In this part, we conduct the corresponding experiments to study the sensitivity of our network Eq. (\ref{manifold network}) regarding to $\lambda$. The two datasets, AT\&T and UMIST, are preprocessed and split according to Subsection \ref{dataset description}. Besides, $\lambda$ varies from $\left\{ 2^{-3}, 2^{-2}, 2^{-1}, 2^0, 2^1, 2^2, 2^3 \right\}$. The accuracy and F1-score related to it are suggested in Fig \ref{Sensitivity}.
	
	$1)$ The curves of two indices are steady when $\lambda$ is relevant small like $\lambda \leq 2^{-1}$. However, the performance of the proposed model drops rapidly when $\lambda$ is larger.
	
	$2)$ Our network is insensitive to trade-off parameter $\lambda$, when $\lambda < 2^{0}$. Therefore, we can either fine-tune $\lambda$ in $(0, 1]$ or simply set it as a median like $0.5$.
	
	\begin{table*}[]
		\renewcommand\arraystretch{1.3}
		\centering
		\caption{Ablation Study on Three Datasets}
		\label{Ablation Study}
		\scalebox{0.80}{
			\begin{tabular}{ccccccc}
				\toprule
				\multicolumn{1}{l}{}	& \multicolumn{2}{c}{\textbf{AT\&T}}	& \multicolumn{2}{c}{\textbf{WAVEFORM}}	& \multicolumn{2}{c}{\textbf{UMIST}}\\ \cline{2-7} 
				\multicolumn{1}{l}{\multirow{-2}{*}{\textbf{}}} 
				& \emph{ACC}	& \emph{F1}             & \emph{ACC}		& \emph{F1}         & \emph{ACC}		& \emph{F1}       \\ \hline
				\textbf{DNN}            & 50.89$\pm$1.74& 35.94$\pm$0.11        & 83.98$\pm$3.96    & 83.69$\pm$4.62    & 72.83$\pm$1.89    & 58.73$\pm$0.06  \\
				\textbf{RidgeR-NN}& 90.86$\pm$0.03& 91.42$\pm$0.02        & 68.49$\pm$4.11    & 65.94$\pm$5.23    & 93.91$\pm$0.47    & 93.92$\pm$1.19  \\
				\textbf{RidgeR-SVM-NN}  & \underline{94.17$\pm$0.02}& \underline{95.75$\pm$0.01}    & \underline{84.26$\pm$1.80}& \underline{83.85$\pm$1.81}    
				& \underline{97.39$\pm$0.47}                         & { \underline{97.01$\pm$0.47}}    \\
				\textbf{RidgeR-SVM-AW-NN}& \textbf{98.75$\pm$0.06} & \textbf{97.94$\pm$0.46} & \textbf{85.44$\pm$0.01} & \textbf{84.12$\pm$0.18} & \textbf{97.39$\pm$0.11} & \textbf{97.09$\pm$0.04} \\ 
				\bottomrule
			\end{tabular}
		}
	\end{table*}

	\begin{figure*}[]
		\centering
		\subfigure[AT\&T]{
			\label{att40}
			\includegraphics[scale=0.3]{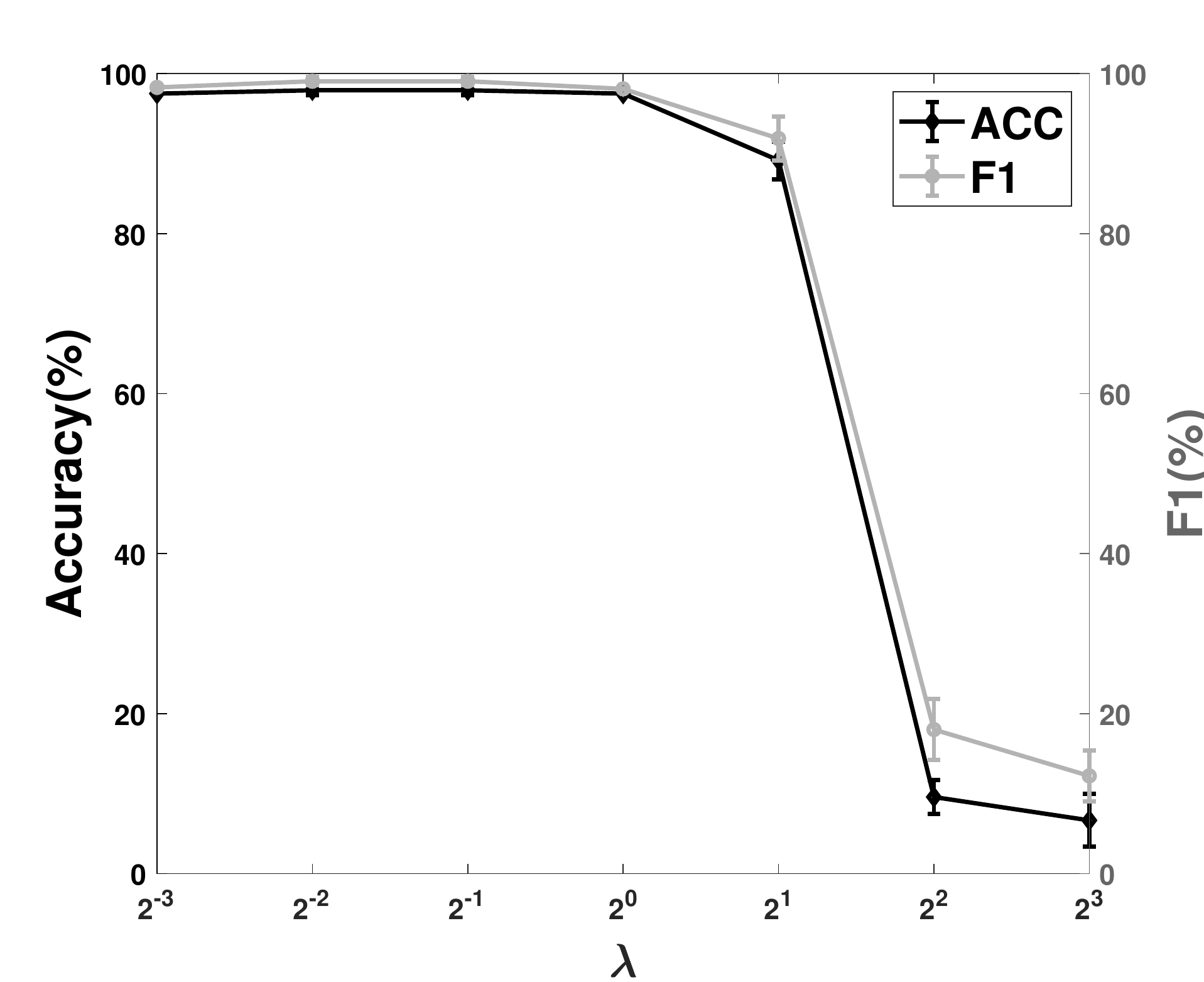}
		}
		\subfigure[UMIST]{
			\label{umist}
			\includegraphics[scale=0.3]{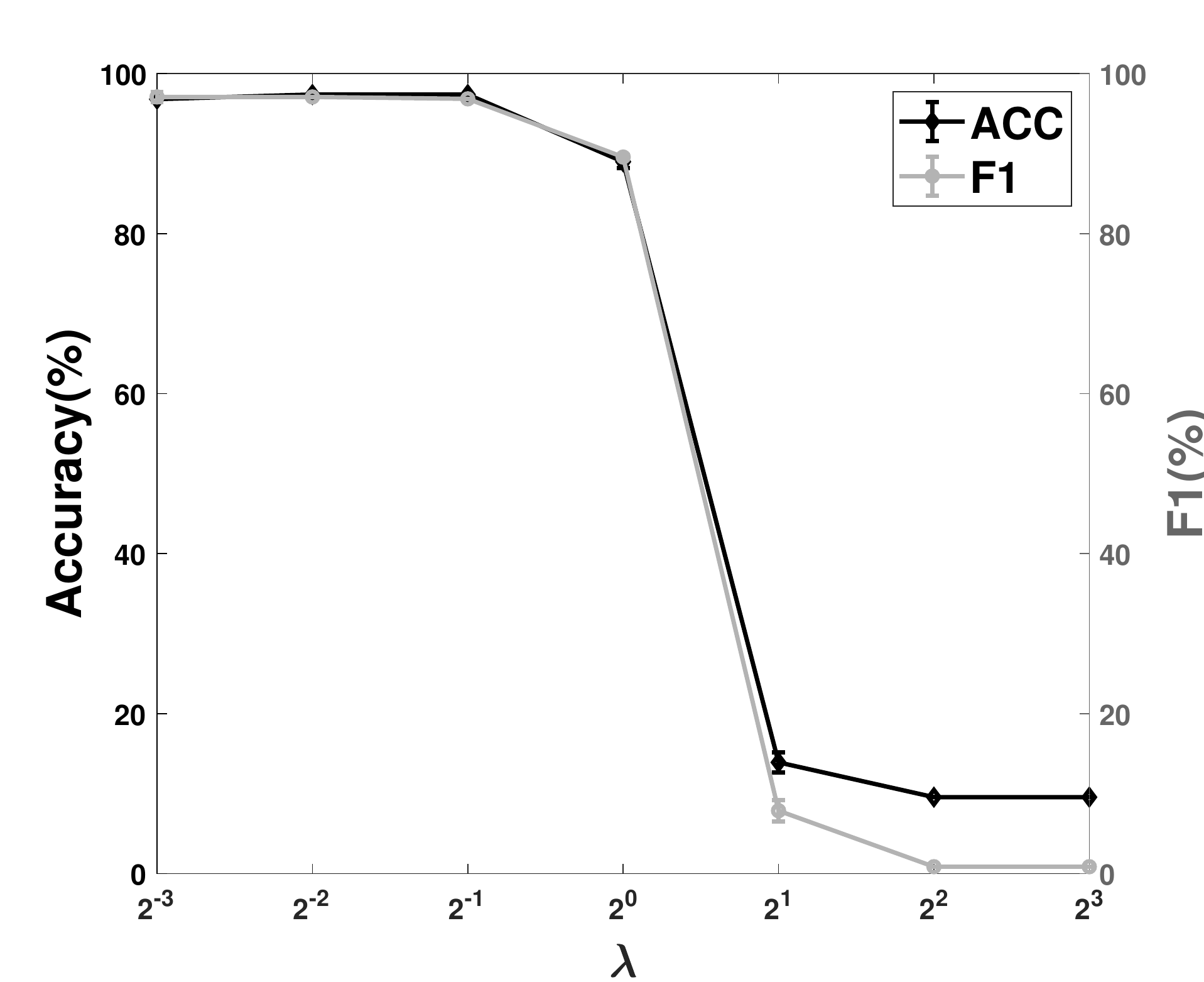}
		}
		\caption{Accuracy and F1-score of the proposed network w.r.t the varying parameter $\lambda \in \left\{ 2^{-3}, 2^{-2}, 2^{-1}, 2^0, 2^1, 2^2, 2^3 \right\}$.}
		\label{Sensitivity}
		\vspace{-5mm}	
	\end{figure*}

	\subsection{Ablation Study}
	
	We conduct an ablation study to evaluate how each part of the proposed network including reconstruction strategy with ridge regression (RidgeR), the flexible Stiefel Manifold SVM (SVM), and the adaptive weight (AW), contribute to overall model performances. Besides, the classic DNN with softmax is utilized as a baseline. The results are summarized in Table \ref{Ablation Study}. We confirm that the non-gradient optimization algorithm \ref{neural network} provides the RidgeR network with the closed-form results, which is helpful to improve the performance. Moreover, contrasted with softmax, SVM assists the model to discover the latent data distribution. Besides, it is shown that unifying the SVM with adaptive weight can make the model more interpretable and further improve performance.

	\section{Conclusion}
	In this paper, we propose a novel non-gradient neural network. Compared with the classic deep neural network optimized via gradient descent, it can be solved with closed-form results via the proposed algorithms which have a faster convergent speed. Besides, it unifies the flexible Stiefel manifold and adaptive weights into the SVM acted as the decision layer, which successfully utilizes the inner structure of the data to predict the label and improve the performance significantly. On the benchmark datasets, the proposed network achieves excellent results. \textbf{In the future, we will attempt to provide the convolutional neural network with a non-gradient optimized strategy}.
	
	\bibliographystyle{nips.bst}
	\bibliography{usvm.bib}
	
	\newpage
	\begin{appendix}
		\setcounter{section}{0}
		\setcounter{myLemma}{0}
		\setcounter{myTheo}{0}
		\section{Proof of Lemma 1}
		\begin{myLemma}
			\label{cal_b}
			Given a feature matrix $\bm X$ and a label matrix $\bm Y$, the problem 
			\begin{equation}
				\label{loss_b}
				\min \limits_{\bm{b}} ||\bm{W}^T\bm{X} + \bm{b}\bm{1}_n^T-\bm{Y}||_F^2+\lambda ||\bm{W}||_F^2
			\end{equation}
			can be solved by $\bm{b}=\frac{\bm{Y}-\bm{W}^T\bm{X}}{n} \bm{1}_n$.
		\end{myLemma}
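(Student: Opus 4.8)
The plan is to treat this as an unconstrained convex quadratic minimization in $\bm{b}$: since the term $\lambda\|\bm{W}\|_F^2$ does not depend on $\bm{b}$, it is constant for the purpose of this subproblem and can be discarded. First I would set $\bm{R} = \bm{W}^T\bm{X} - \bm{Y}$, so that the objective becomes (up to the constant) $g(\bm{b}) = \|\bm{R} + \bm{b}\bm{1}_n^T\|_F^2$. Expanding the Frobenius norm as a trace and using $\bm{1}_n^T\bm{1}_n = n$ gives $g(\bm{b}) = \|\bm{R}\|_F^2 + 2\,\bm{1}_n^T\bm{R}^T\bm{b} + n\|\bm{b}\|_2^2$, where the cross term comes from ${\rm Tr}(\bm{R}^T\bm{b}\bm{1}_n^T) = \bm{1}_n^T\bm{R}^T\bm{b}$ and the quadratic term from ${\rm Tr}(\bm{1}_n\bm{b}^T\bm{b}\bm{1}_n^T) = n\,\bm{b}^T\bm{b}$. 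This is a strictly convex quadratic in $\bm{b}$, since its Hessian equals $2n$ times the identity, which is positive definite.

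Next I would impose the first-order optimality condition $\nabla_{\bm{b}}\,g(\bm{b}) = 2\bm{R}\bm{1}_n + 2n\bm{b} = \bm{0}$, which yields the unique stationary point $\bm{b} = -\frac{1}{n}\bm{R}\bm{1}_n = \frac{1}{n}(\bm{Y} - \bm{W}^T\bm{X})\bm{1}_n$, exactly the claimed closed form. Strict convexity then upgrades this stationary point to the global minimizer, completing the argument.

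There is no substantial obstacle here; the only care needed is the bookkeeping in the trace expansion and the explicit remark that convexity promotes the first-order condition to global optimality. Alternatively, one can bypass calculus entirely by splitting the Frobenius norm over columns: writing $\bm{r}_i$ for the $i$-th column of $\bm{R}$, the objective equals $\sum_{i=1}^{n}\|\bm{r}_i + \bm{b}\|_2^2$, which is minimized by the negative mean of the residual columns, $\bm{b} = -\frac{1}{n}\sum_{i=1}^{n}\bm{r}_i = -\frac{1}{n}\bm{R}\bm{1}_n$, giving the same answer.
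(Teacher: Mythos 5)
Your proposal is correct and follows essentially the same route as the paper: drop the $\bm{W}$-regularizer, expand the Frobenius norm via the trace, set the gradient with respect to $\bm{b}$ to zero, and use $\bm{1}_n^T\bm{1}_n=n$ to isolate $\bm{b}=\frac{1}{n}(\bm{Y}-\bm{W}^T\bm{X})\bm{1}_n$. Your explicit convexity remark and the column-wise "negative mean of residuals" alternative are nice additions the paper omits, but they do not change the substance of the argument.
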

		
		\begin{proof}
			Lemma \ref{cal_b} aims to solve the the $\bm b$. Therefore, the problem (\ref{loss_b}) is formulated as
			\begin{equation}
				\label{simplify_b}
				\begin{split}
					\mathcal{J}&= \min \limits_{\bm b} ||\bm{W}^T\bm{X} + \bm{b}\bm{1}_n^T-\bm{Y}||_F^2\\
					& \Rightarrow  \min \limits_{\bm b} {\rm Tr}((\bm{W}^T\bm{X} + \bm{b}\bm{1}_n^T-\bm{Y})^T(\bm{W}^T\bm{X} + \bm{b}\bm{1}_n^T-\bm{Y})) \\
					& \Rightarrow  \min \limits_{\bm b} 2{\rm Tr}(\bm{1}_n \bm{b}^T \bm{W}^T \bm{X}) +
					{\rm Tr}(\bm{1}_n \bm{b}^T \bm{b} \bm{1}_n^T) + 2{\rm Tr}(\bm{1}_n \bm{b}^T \bm{Y}).
				\end{split}
			\end{equation}
			Owing to constraint on $\bm b$, Eq. (\ref{simplify_b}) is derivated w.r.t $\bm b$ and set to $0$ like
			\begin{equation}
				\label{derivate_b}
				\left\{
				\begin{split}
					\nabla_{\bm{b}} \mathcal{J}&=0\\
					\bm{b} \bm{1}_n^T \bm{1}_n &= \bm{Y}\bm{1}_n -\bm{W}^T \bm{X} \bm{1}_n. \\
				\end{split}
				\right.
			\end{equation}
			Then, we have  $\bm{b}=\frac{\bm{Y}-\bm{W}^T\bm{X}}{n} \bm{1}_n$.
		\end{proof}
		
		\section{Proof of Lemma 2}
		\begin{myLemma} 
			\label{cal_w}
			Given a feature matrix $\bm X$ and a label matrix $\bm Y$, the problem 
			\begin{equation}
				\label{loss_w}
				\min \limits_{\bm{W}} ||\bm{W}^T\bm{X}-\bm{Y}||_F^2+\lambda ||\bm{W}||_F^2
			\end{equation}
			can be solved by $\bm{W}= (\bm{X}\bm{X}^T+\lambda\bm{I}_d)^{-1} \bm{X}\bm{Y}^T$.
		\end{myLemma}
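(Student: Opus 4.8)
The plan is to mirror the structure of the proof of Lemma \ref{cal_b}: rewrite the Frobenius norms as traces, differentiate the resulting smooth objective with respect to $\bm{W}$, set the gradient to zero, and solve the linear matrix equation that results. Writing $\mathcal{J}(\bm{W}) = {\rm Tr}\big((\bm{W}^T\bm{X}-\bm{Y})^T(\bm{W}^T\bm{X}-\bm{Y})\big) + \lambda\,{\rm Tr}(\bm{W}^T\bm{W})$ and expanding the first trace as ${\rm Tr}(\bm{W}^T\bm{X}\bm{X}^T\bm{W}) - 2\,{\rm Tr}(\bm{W}^T\bm{X}\bm{Y}^T) + {\rm Tr}(\bm{Y}^T\bm{Y})$, I observe that only the first two terms and the regularizer depend on $\bm{W}$, so the constant ${\rm Tr}(\bm{Y}^T\bm{Y})$ can be dropped.

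Next I would apply the standard matrix-derivative identities $\nabla_{\bm{W}}\,{\rm Tr}(\bm{W}^T\bm{X}\bm{X}^T\bm{W}) = 2\bm{X}\bm{X}^T\bm{W}$, $\nabla_{\bm{W}}\,{\rm Tr}(\bm{W}^T\bm{X}\bm{Y}^T) = \bm{X}\bm{Y}^T$, and $\nabla_{\bm{W}}\,{\rm Tr}(\bm{W}^T\bm{W}) = 2\bm{W}$, obtaining
\[
\nabla_{\bm{W}}\mathcal{J} = 2\bm{X}\bm{X}^T\bm{W} - 2\bm{X}\bm{Y}^T + 2\lambda\bm{W}.
\]
Setting this to zero gives $(\bm{X}\bm{X}^T + \lambda\bm{I}_d)\bm{W} = \bm{X}\bm{Y}^T$, and left-multiplying by $(\bm{X}\bm{X}^T + \lambda\bm{I}_d)^{-1}$ yields the claimed closed form $\bm{W} = (\bm{X}\bm{X}^T + \lambda\bm{I}_d)^{-1}\bm{X}\bm{Y}^T$.

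Two short remarks complete the argument. First, the inverse exists: $\bm{X}\bm{X}^T$ is symmetric positive semidefinite, so for $\lambda > 0$ the matrix $\bm{X}\bm{X}^T + \lambda\bm{I}_d$ is positive definite and hence invertible. Second, the stationary point is genuinely the global minimizer: $\mathcal{J}$ is the sum of a convex quadratic (a squared norm composed with an affine map in $\bm{W}$) and the strictly convex term $\lambda\|\bm{W}\|_F^2$, so $\mathcal{J}$ is strictly convex and its unique critical point is the global minimum. I do not expect a real obstacle here — the proof is routine; the only care needed is in the matrix-calculus bookkeeping (transposes and the factors of $2$) and in explicitly invoking $\lambda > 0$ and strict convexity, so that the ``can be solved by'' claim is rigorously an equivalence rather than merely a necessary first-order condition.
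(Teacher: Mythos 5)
Your proof is correct and follows essentially the same route as the paper's: expand the Frobenius norms as traces, differentiate with respect to $\bm{W}$, set the gradient to zero, and solve the normal equation $(\bm{X}\bm{X}^T+\lambda\bm{I}_d)\bm{W}=\bm{X}\bm{Y}^T$. Your version is in fact slightly more careful — the paper's displayed stationarity condition contains an apparent typo ($\bm{X}\bm{Y}^T+\lambda\bm{I}_d$ where $\bm{X}\bm{X}^T+\lambda\bm{I}_d$ is meant), and your added remarks on positive definiteness for $\lambda>0$ and strict convexity make the ``solved by'' claim a genuine equivalence rather than only a first-order condition.
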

		
		\begin{proof}
			Lemma \ref{cal_w} aims to solve the the $\bm W$. Therefore, the problem (\ref{loss_w}) is formulated as
			\begin{equation}
				\label{simplify_w}
				\begin{split}
					\mathcal{L}&=\min \limits_{\bm{W}} ||\bm{W}^T\bm{X}-\bm{Y}||_F^2+\lambda ||\bm{W}||_F^2 \\
					& \Rightarrow \min \limits_{\bm{W}} {\rm Tr}((\bm{W}^T\bm{X}-\bm{Y})^T(\bm{W}^T\bm{X}-\bm{Y})) + \lambda{\rm Tr}(\bm{W}^T \bm{W}) \\
					& \Rightarrow \min \limits_{\bm{W}} {\rm Tr}(\bm{X}^T \bm{W} \bm{W}^T \bm{X}) - 2{\rm Tr}(\bm{W}^T \bm{X} \bm{Y}^T) + \lambda {\rm Tr}(\bm{W}^T \bm{W}).
				\end{split}
			\end{equation}
			Then, Eq. (\ref{simplify_w}) can be solved via taking the derivation of $\bm W$ like
			\begin{equation}
				\label{derivate_w}
				\left\{
				\begin{split}
					\nabla_{\bm{W}} \mathcal{L}&=0\\
					(\bm{X} \bm{Y}^T + \lambda \bm{I}_d) \bm{W} &= \bm{X}\bm{Y}^T. \\
				\end{split}
				\right.
			\end{equation}
			Due to that $\bm{X} \bm{Y}^T + \lambda \bm{I}_d$ is a full rank matrix, we can obtain the $\bm{W}= (\bm{X}\bm{X}^T+\lambda\bm{I}_d)^{-1} \bm{X}\bm{Y}^T$.
		\end{proof}

		\section{Proof of Theorem 1}
		\begin{myTheo} 
			\label{cal_alpha}
			Suppose that $f_{(1)} \leq f_{(2)} \leq ... \leq f_{(n)}$. If $\gamma=\frac{n-1}{2}f_{(n)}-\frac{1}{2} \sum_{i=1}^{n-1}f_{(i)}$, the optimal $\bm{\alpha}$ is 
			\begin{equation}
				\label{cal_a}
				\alpha_i = (\frac{f_{(n)} - f_{(n-1)}}{(n-1) f_{(n)} - \sum \limits_{j=1}^k f_{(j)}})_+ .
			\end{equation}
		\end{myTheo}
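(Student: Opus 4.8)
The plan is to treat Eq.~(\ref{adaptive weigt svm}) as a minimization purely in $\bm{\alpha}$, since $\lambda\|\bm{W}\|_F^2$ and the dependence on $\bm{W},\bm{b},\bm{M}$ inside the (now fixed) scalars $f_i$ are constants. Dropping them leaves the strongly convex quadratic program
\[
\min_{\bm{\alpha}^T\bm{1}_n = 1,\ \bm{\alpha}\geq 0}\ \sum_{i=1}^n \alpha_i f_i + \gamma\|\bm{\alpha}\|_2^2
\]
over the probability simplex, which therefore has a unique minimizer characterized by the KKT conditions. First I would form the Lagrangian $L = \sum_i \alpha_i f_i + \gamma\sum_i\alpha_i^2 - \eta\big(\sum_i\alpha_i - 1\big) - \sum_i \beta_i\alpha_i$ with a multiplier $\eta$ for the equality constraint and $\beta_i\geq 0$ for $\alpha_i\geq 0$, set $\partial L/\partial\alpha_i = 0$ to obtain $2\gamma\alpha_i = \eta + \beta_i - f_i$, and use complementary slackness $\beta_i\alpha_i = 0$ together with $\beta_i\ge 0$ to collapse this into $\alpha_i = \big((\eta - f_i)/(2\gamma)\big)_+$, leaving the single scalar $\eta$ to be fixed by $\sum_i\alpha_i = 1$.

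Next I would sort $f_{(1)}\leq f_{(2)}\leq\cdots\leq f_{(n)}$. Because $\alpha_i>0$ iff $f_i<\eta$, the active set is automatically a prefix $\{(1),\dots,(k)\}$ in sorted order; summing the active coordinates gives $\eta = \big(2\gamma + \sum_{j=1}^{k} f_{(j)}\big)/k$, and this choice of $k$ is self-consistent exactly when $f_{(k)} < \eta \leq f_{(k+1)}$. The heart of the proof is then to substitute the prescribed value $\gamma = \tfrac{n-1}{2}f_{(n)} - \tfrac12\sum_{i=1}^{n-1}f_{(i)}$, equivalently $2\gamma = (n-1)f_{(n)} - \sum_{j=1}^{n-1}f_{(j)}$, and check that $k=n-1$ is the consistent choice: it yields $\eta = f_{(n)}$, which satisfies $f_{(n-1)}\leq \eta\leq f_{(n)}$ by the ordering and makes $\alpha_{(n)} = \big((f_{(n)}-f_{(n)})/(2\gamma)\big)_+ = 0$ automatically. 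Hence for every index $\alpha_{(i)} = \big((f_{(n)} - f_{(i)})/(2\gamma)\big)_+$, a genuine nonnegative probability vector since $f_{(i)}\leq f_{(n)}$; expanding $2\gamma$ recovers the formula of Eq.~(\ref{cal_a}) (read with $f_{(i)}$ in the numerator and the denominator sum running to $n-1$).

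I expect the real work to be the consistency and uniqueness bookkeeping rather than any heavy computation. One must note that the unique minimizer forces a unique $\eta$ — indeed $\eta = 2\gamma\alpha_i + f_i$ for any $i$ in the active set, which is nonempty since the $\alpha_i$ sum to one — so no active set of a different size can satisfy all the KKT and feasibility relations simultaneously; this is what pins down $k=n-1$ for the given $\gamma$. The remaining delicate point is the degenerate case $f_{(n-1)} = f_{(n)}$ (or several coinciding top values), where $2\gamma$ can vanish: then $\gamma=0$, the objective is linear on the simplex, and the stated ratio is a $0/0$ expression that must be interpreted as the appropriate limit; the statement should flag this. Everything else is routine algebraic manipulation of the KKT system.
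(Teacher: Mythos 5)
Your proposal is correct and follows essentially the same route as the paper's own proof: fix $f_i$, write the KKT conditions for the simplex-constrained quadratic, obtain $\alpha_i=(\xi-\tfrac{f_i}{2\gamma})_+$, determine the multiplier from $\sum_i\alpha_i=1$ with active set $k=n-1$, and substitute the prescribed $\gamma$. Your reading of the final formula with $f_{(i)}$ in the numerator and the sum running to $n-1$ matches the paper's own penultimate expression $\bigl(\tfrac{f_{(n)}-f_{(i)}}{(n-1)f_{(n)}-\sum_{j=1}^{n-1}f_{(j)}}\bigr)_+$ (the theorem statement's $f_{(n-1)}$ and unbound $k$ are typos), and your remarks on uniqueness and the degenerate case $f_{(n-1)}=f_{(n)}$ are sound additions the paper omits.
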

		
		\begin{proof}
			Theorem \ref{cal_alpha} aims to solve the $\bm{\alpha}$. Therefore, we fix $M$, $W$ and $\bm b$. The objective function is formulated as 
			\begin{equation}
				\label{obj_alpha}
				\min \limits_{\textbf{1}_n^T \bm \alpha = 1, \bm \alpha \geq 0} \sum \limits_{i=1}^n \alpha_i f_i + \gamma \alpha_i^2, 
			\end{equation}
			where $f_i = \| W^T \bm x_i + \bm b - \bm y_i - \bm y_i \odot \bm m_i \|_2^2$, $\gamma$ is the trade-off coefficient, and $\bm \alpha \in R^d$ is the weight vector. The first term in problem.(\ref{obj_alpha}) means that a points with large classification errors should be assigned with a small weight. In general, $f_i$ can be viewed as a constant value and Eq. (\ref{obj_alpha}) has the same solutions with following problem
			\begin{equation}
				\min \limits_{\textbf{1}_n^T \bm \alpha = 1, \bm \alpha \geq 0} \|\bm \alpha + \frac{\bm f}{2 \gamma}\|_2^2,
			\end{equation}
			where $\bm f = [f_1,f_2,...,f_n]$.
			Transformed with the Lagrangian function, the current problem is represented as
			\begin{equation}
				\mathcal{L}_\alpha  = \|\bm \alpha + \frac{\bm f}{2 \gamma}\|_2^2 + \xi (1 - \sum \limits_{i=1}^n \alpha_i) - \sum \limits_{i=1}^n \beta_i \alpha_i,
			\end{equation}
			where $\xi$ and $\beta_i$ are the Largranian multipliers. The KKT conditions are given as
			\begin{equation}
				\left\{
				\begin{array}{l}
					\frac{\partial \mathcal{L}_\alpha}{\partial \alpha_i} = \alpha_i + \frac{f_i}{2 \gamma} - \xi - \beta_i = 0 \\
					\beta_i \alpha_i = 0 \\
					\sum \limits_{i=1}^n \alpha_i = 1, \beta_i \geq 0, \alpha_i \geq 0
				\end{array}
				.
				\right.
			\end{equation}
			Consider the following cases
			\begin{equation}
				\left\{
				\begin{array}{l}
					\alpha_i = 0 \Rightarrow \xi - \frac{f_i}{2 \gamma} = - \beta_i \leq 0 \\
					\alpha_i \geq 0 \Rightarrow \alpha_i = \xi - \frac{f_i}{2 \gamma}
				\end{array}
				,
				\right.
			\end{equation}
			which means
			\begin{equation}
				\alpha_i = (\xi - \frac{f_i}{2 \gamma})_+.
			\end{equation}
			Without loss of generality, we suppose that $f_{(1)} \leq f_{(2)} \leq ... \leq f_{(n)}$ and $\alpha_1 \geq \alpha_2 \geq \cdots \geq \alpha_n$. Therefore, we have
			\begin{equation}
				\left\{
				\begin{array}{l}
					\alpha_k \textgreater 0 \Rightarrow \xi - \frac{f_{(k)}}{2 \gamma} > 0\\
					\alpha_{k+1} = 0 \Rightarrow \xi - \frac{f_{(k+1)}}{2 \gamma} \leq 0
				\end{array}
				,
				\right.
			\end{equation}
			where $k \in [1, n]$. Due to $\sum \limits_{i=1}^n \alpha_i = 1$, we have
			\begin{equation}
				\sum \limits_{i=1}^n \alpha_i = k \xi - \sum \limits_{i=1}^k \frac{f_{(i)}}{2 \gamma} = 1,
			\end{equation}
			which means
			\begin{equation}
				\xi = \frac{1}{2 k \gamma} \sum \limits_{i=1}^k f_{(i)} + \frac{1}{k}.
			\end{equation}
			Combine with our assumption and we have
			\begin{equation}
				\begin{array}{l}
					\frac{1}{2 k \gamma} \sum \limits_{i=1}^k f_{(i)} + \frac{1}{k} - \frac{f_{(k+1)}}{2 \gamma} \leq  0 <  \frac{1}{2 k \gamma} \sum \limits_{i=1}^k f_{(i)} + \frac{1}{k} - \frac{f_{(k)}}{2 \gamma} \\
					\Rightarrow  \frac{f_{(k)}}{2 \gamma} <  \frac{1}{2 k \gamma} \sum \limits_{i=1}^k f_{(i)} + \frac{1}{k}  \leq \frac{f_{(k+1)}}{2 \gamma} \\
					\Rightarrow \frac{k f_{(k)}}{2} - \frac{1}{2} \sum \limits_{i=1}^k f_{(i)} < \gamma \leq \frac{k f_{(k+1)}}{2} - \frac{1}{2} \sum \limits_{i=1}^k f_{(i)}
					.
				\end{array}
			\end{equation}
			When $k=n-1$ and $\gamma=\frac{n-1}{2}f_{(n)}-\frac{1}{2} \sum_{i=1}^{n-1}f_{(i)}$, we obtain
			\begin{equation}
				\label{update_alpha}
				\begin{split}
					\alpha_i & = (\frac{\sum \limits_{i=1}^{n-1} f_{(i)} + 2\gamma}{2 (n-1) \gamma} - \frac{f_{(i)}}{2 \gamma})_+  = (\frac{f_{(n)} - f_{(n-1)}}{(n-1) f_{(n)} - \sum \limits_{i=1}^{n-1} f_{(i)}})_+
				\end{split}
				.
			\end{equation}
			
		\end{proof}
		
		\section{Proof of Theorem 2}
		\begin{myTheo}
			\label{solve manifold}
			Suppose that $\bm{\alpha}$ is a constant and let $\bm{G}=\bm{Y}-\bm{Y}\odot \bm{M}$. $\bm W$ and $\bm b$ can be solved by
			\begin{equation}
				\label{W_b}
				\bm{W} = \bm{S}^{-1} \bm{U} \bm{\Lambda}^T \bm{V}^T \quad and \quad
				\bm{b} = \frac{\bm{G}\bm{D}\bm{1}_n-\bm{W}^T\bm{X}\bm{D}\bm{1}_n}{\bm{1}_n^T\bm{D}\bm{1}_n}
			\end{equation}
			where $\bm{S}=(\bm{X}\hat{\bm{D}}\bm{X}^T+\lambda \bm{I}_d)^{\frac{1}{2}}$, $\hat{\bm{D}}=\hat{\bm{C}}\bm{D}\hat{\bm{C}}^T$,
			$\bm{D}={\rm diag}(\bm{\alpha})$ and $\bm{\Lambda}=[\bm{I}_c, \bm{0}]\in R^{c\times d}$. $\bm{U},\bm{V}$ are the left and right singular of $\bm{S}^{-1}\bm{X}\hat{\bm{D}}\bm{G}$, respectively. Having obtained the solution of $\bm W, \bm b$, slack variable $\bm M$ can be solved by
			\begin{equation}
				\label{cal_M}
				\bm{m}_i=(\bm{y}_i \odot (\bm{W}^T \bm{x}_i+\bm{b})-\bm{1}_c)_+.
			\end{equation}
		\end{myTheo}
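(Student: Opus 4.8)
The plan is to hold $\bm{\alpha}$ (and hence $\bm{D}={\rm diag}(\bm{\alpha})$, the weighted-centring matrix $\hat{\bm{C}}$, and $\hat{\bm{D}}=\hat{\bm{C}}\bm{D}\hat{\bm{C}}^T$) fixed, eliminate $\bm{b}$ by a first-order condition, use the manifold constraint to linearise the $\bm{W}$-subproblem into a Procrustes-type trace maximisation over the Stiefel manifold, and finally read off $\bm{M}$ by a pointwise argument. Throughout write $\bm{X}=\bm{H}^{t-1}$, $\bm{G}=\bm{Y}-\bm{Y}\odot\bm{M}$, collect the residuals as $\bm{R}=\bm{W}^T\bm{X}+\bm{b}\bm{1}_n^T-\bm{G}$, and note that the weighted SVM loss equals ${\rm Tr}(\bm{R}\bm{D}\bm{R}^T)$.

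First I would solve for $\bm{b}$: setting the gradient of ${\rm Tr}(\bm{R}\bm{D}\bm{R}^T)$ in $\bm{b}$ to zero gives $\bm{R}\bm{D}\bm{1}_n=\bm{0}$, i.e. $\bm{b}\,(\bm{1}_n^T\bm{D}\bm{1}_n)=(\bm{G}-\bm{W}^T\bm{X})\bm{D}\bm{1}_n$, which is the stated formula (this mirrors Lemma~\ref{cal_b} with the weight $\bm{D}$). Substituting this $\bm{b}$ back, a short computation gives $\bm{R}=(\bm{W}^T\bm{X}-\bm{G})\hat{\bm{C}}$, so ${\rm Tr}(\bm{R}\bm{D}\bm{R}^T)={\rm Tr}((\bm{W}^T\bm{X}-\bm{G})\hat{\bm{D}}(\bm{W}^T\bm{X}-\bm{G})^T)$; expanding and using $\hat{\bm{D}}=\hat{\bm{D}}^T$, the objective as a function of $\bm{W}$ becomes ${\rm Tr}(\bm{W}^T(\bm{X}\hat{\bm{D}}\bm{X}^T+\lambda\bm{I}_d)\bm{W})-2\,{\rm Tr}(\bm{W}^T\bm{X}\hat{\bm{D}}\bm{G}^T)$ plus a term independent of $\bm{W}$.

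Next I would invoke the constraint $\bm{W}^T(\bm{X}\hat{\bm{D}}\bm{X}^T+\lambda\bm{I}_d)\bm{W}=\bm{I}_c$: it pins the first trace to ${\rm Tr}(\bm{I}_c)=c$, a constant, so minimising the objective is equivalent to maximising ${\rm Tr}(\bm{W}^T\bm{X}\hat{\bm{D}}\bm{G}^T)$ under the same constraint. Since $\bm{\alpha}\ge\bm{0}$ implies $\bm{D}\succeq\bm{0}$ and hence $\hat{\bm{D}}\succeq\bm{0}$, while $\lambda>0$, the matrix $\bm{S}=(\bm{X}\hat{\bm{D}}\bm{X}^T+\lambda\bm{I}_d)^{1/2}$ is well defined and invertible. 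The substitution $\bm{P}=\bm{S}\bm{W}$ turns the constraint into $\bm{P}^T\bm{P}=\bm{I}_c$ and the objective into ${\rm Tr}(\bm{P}^T\bm{B})$ with $\bm{B}=\bm{S}^{-1}\bm{X}\hat{\bm{D}}\bm{G}^T$. This is the classical trace maximisation over the Stiefel manifold: with the SVD $\bm{B}=\bm{U}\bm{\Sigma}\bm{V}^T$, the von Neumann / orthogonal-Procrustes inequality gives ${\rm Tr}(\bm{P}^T\bm{B})\le\sum_i\sigma_i$ with equality at $\bm{P}=\bm{U}\bm{\Lambda}^T\bm{V}^T$ (the first $c$ left singular vectors rotated by $\bm{V}^T$, with $\bm{\Lambda}=[\bm{I}_c,\bm{0}]$), so $\bm{W}=\bm{S}^{-1}\bm{P}=\bm{S}^{-1}\bm{U}\bm{\Lambda}^T\bm{V}^T$; equivalently a Lagrange-multiplier computation yields the polar factor $\bm{P}=\bm{B}(\bm{B}^T\bm{B})^{-1/2}=\bm{U}\bm{\Lambda}^T\bm{V}^T$.

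Finally, with $\bm{W},\bm{b}$ fixed the regularisers and the manifold constraint do not involve $\bm{M}$, and the remaining objective $\sum_i\alpha_i\|\bm{W}^T\bm{x}_i+\bm{b}-\bm{y}_i-\bm{y}_i\odot\bm{m}_i\|_2^2$ separates across samples and coordinates; since each entry of $\bm{y}_i$ squares to $1$, minimising $(y_{ij}(\bm{W}^T\bm{x}_i+\bm{b})_j-1-m_{ij})^2$ over $m_{ij}\ge0$ gives $m_{ij}=(y_{ij}(\bm{W}^T\bm{x}_i+\bm{b})_j-1)_+$, i.e. $\bm{m}_i=(\bm{y}_i\odot(\bm{W}^T\bm{x}_i+\bm{b})-\bm{1}_c)_+$. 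I expect the Stiefel step to be the main obstacle: one must justify that the constraint genuinely makes the quadratic term constant (so the problem linearises), establish the Procrustes optimality of $\bm{U}\bm{\Lambda}^T\bm{V}^T$ including the rectangular and rank-deficient cases of $\bm{B}$, and keep track of the assumption $d\ge c$; the $\bm{b}$ and $\bm{M}$ updates are routine once this is settled.
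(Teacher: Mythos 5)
Your proposal is correct and follows essentially the same route as the paper's proof: eliminate $\bm{b}$ by the first-order condition, show the residual becomes $(\bm{W}^T\bm{X}-\bm{G})\hat{\bm{C}}$ so the loss involves $\hat{\bm{D}}$, use the manifold constraint to reduce the $\bm{W}$-step to a trace maximisation solved by the SVD/Procrustes lemma after the substitution $\bm{Q}=\bm{S}\bm{W}$, and obtain $\bm{M}$ by coordinate-wise thresholding (the paper phrases this last step via KKT conditions, but it is the same computation). The only differences are cosmetic — the order of the subproblems and your slightly more careful remarks about invertibility of $\bm{S}$ and the case $d\ge c$, which the paper leaves implicit.
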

		
		\begin{proof}
			Theorem \ref{solve manifold} aims to obtain the analytic solutions of $\bm{M}$, $\bm W$ and $\bm b$. Therefore, the objective function is formulated as
			\begin{equation}
				\label{loss_w_b_m}
				\min \limits_{\bm{M} \geq 0,\bm{W},\bm{b}} \sum \limits_{i=1}^n \alpha_i f_i +\lambda \|\bm W\|_F^2, \  
				s.t. \ \bm{W}^T(\bm{X}\hat{\bm{C}}\bm{D}\hat{\bm{C}}^T\bm{X}^T+\lambda \bm{I}_d)\bm{W}=\bm{I}_c,\\
			\end{equation} 
			where $f_i = \| W^T \bm x_i + \bm b - \bm y_i - \bm y_i \odot \bm m_i \|_2^2$. These variables can be divided into two parts, solving slack variable $\bm M$ and optimizing parameters, $\bm W$ and $\bm b$.
			
			$A.\; Optimize \; \bm{M} \; with \; fixing \; \bm{W} \; and \; \bm{b}$
			
			Regarding $\bm W$, $\bm b$ and $\bm \alpha$ as constants, Eq. (\ref{loss_w_b_m}) to optimize the $\bm M$ is formulated as the following sub-problems 
			\begin{equation}
				\label{m1}
				\begin{split}
					\min \limits_{\bm m_i \geq 0} \|W^T \bm x_i + b - \bm y_i - \bm y_i \odot \bm m_i \|_2^2.
				\end{split}
			\end{equation}
			Due to $\bm y_i \odot \bm y_i = 1$, Eq. (\ref{m1}) is defined like
			\begin{equation}
				\label{m2}
				\min \limits_{\bm m_i \geq 0} \|\bm y_i \odot (W^T \bm x_i + b) - \textbf{1}_c - \bm m_i \|_2^2.
			\end{equation}
			We utilize the Lagrange multipliers to transform Eq. (\ref{m2}) like
			\begin{equation}
				\mathcal{L}_m = \|\bm y_i \odot (W^T \bm x_i + \bm b) - \textbf{1}_c - \bm m_i\|_2^2 - \bm \rho_i^T \bm m_i ,
			\end{equation}
			where $\bm \rho_i \geq 0$ is a Lagrange multiplier. Based on KKT conditions, we have the following derivation
			\begin{equation} \notag
				\left\{
				\begin{array}{l}
					2 \bm m_i - 2 \bm y_i \odot (W^T \bm x_i + \bm b) + 2 - \bm \rho_i = 0 \\
					\bm \rho_i \odot \bm m_i = 0 \\
					\bm \rho_i \geq 0, \bm m_i \geq 0 
				\end{array}
				\right.
				.
			\end{equation}
			For $m_{ij}=0$, $y_{ij} (W^T \bm x_i + \bm b)_j - 1=- \frac{\rho_{ij}}{2}$. Due to $\bm \rho_i \geq 0$, $- \frac{\rho_{ij}}{2}$ is less than or equal to 0. If $m_{ij} \geq 0$, we can obtain that $m_{ij} = y_{ij} (W^T \bm x_i + \bm b)_j - 1$. Therefore, the solution to $\bm{m}_i$ is unified as
			\begin{equation}
				\label{cal_m}
				\bm m_i = (\bm y_i \odot (W^T \bm x_i + \bm b) - \textbf{1}_c)_+ .
			\end{equation}
			
			$B.\; Optimize \; \bm{W} \; and \; \bm{b} \; with \; fixing \; \bm{M}$
			
			Having obtained the $\bm M$, the objective function can be reformulated as 
			\begin{equation}
				\label{w_b_1}
				\mathcal{J}=\min \limits_{\bm{W}, \bm{b}} \sum \limits_{i=1}^n \alpha_i || W^T \bm x_i + \bm b - \bm{g}_i||_F^2 +\lambda \|\bm W\|_F^2, \  
				s.t. \ \bm{W}^T(\bm{X}\hat{\bm{C}}\bm{D}\hat{\bm{C}}^T\bm{X}^T+\lambda \bm{I}_d)\bm{W}=\bm{I}_c,\\
			\end{equation}
			where $\bm G = \bm Y - \bm Y \odot \bm M$. Since there is no constraint on $\bm b$, it can be solved via $\frac{\partial \mathcal{J}}{\partial \bm{b}}=0$ and the closed-form results like
			\begin{equation}
				\label{b_1}
				\bm{b} = \frac{\bm{G}\bm{D}\bm{1}_n-\bm{W}^T\bm{X}\bm{D}\bm{1}_n}{\bm{1}_n^T\bm{D}\bm{1}_n}.
			\end{equation}
			
			Therefore, the bias term is rewritten as
			\begin{equation}
				\label{b_2}
				\bm{b}\bm{1}_n^T=\bm{G}\frac{1}{\bm{1}_n^T \bm{D} \bm{1}_n} \bm{D} \bm{1}_n \bm{1}_n^T-\bm{W}^T \bm{X} \frac{1}{\bm{1}_n^T \bm{D} \bm{1}_n} \bm{D} \bm{1}_n \bm{1}_n^T
				=(\bm{G}-\bm{W}^T\bm{X})(\bm{I}_n-\hat{\bm{C}}).
			\end{equation}
			
			Furthermore, the residual error can be reformulated as
			\begin{equation}
				\label{residual error}
				\bm{W}^T\bm{X}+\bm{b}\bm{1}_n^T-\bm{G}=(\bm{W}^T\bm{X}-\bm{G})-(\bm{W}^T\bm{X}-\bm{G})(\bm{I}_n-\hat{\bm{C}})=(\bm{W}^T\bm{X}-\bm{G})\hat{\bm{C}}.
			\end{equation}
			
			For simplicity, define $\hat{\bm{D}}=\hat{\bm{C}}\bm{D}\hat{\bm{C}}^T$. The problem Eq. (\ref{w_b_1}) can be reformulated as 
			\begin{equation}
				\label{w_b_2}
				\begin{split}
					\mathcal{J}&=\min \limits_{\bm{W}} {\rm Tr} ((\bm{W}^T \bm{X}+\bm{b}\bm{1}_n^T-\bm{G})^T\bm{D} (\bm{W}^T \bm{X}+\bm{b}\bm{1}_n^T-\bm{G}) )+\lambda {\rm Tr} (\bm{W}^T \bm{W}) \\
					& \Rightarrow \min \limits_{\bm{W}} {\rm Tr} (((\bm{W}^T\bm{X}-\bm{G})\hat{\bm{C}})^T \bm{D} ((\bm{W}^T\bm{X}-\bm{G})\hat{\bm{C}}) + \lambda{\rm Tr}(\bm{W}^T \bm{W}) \\
					& \Rightarrow \min \limits_{\bm{W}} {\rm Tr}((\bm{W}^T\bm{X}-\bm{G}) \hat{\bm{D}} (\bm{W}^T\bm{X}-\bm{G})^T) + \lambda {\rm Tr}(\bm{W}^T \bm{W}) \\
					& \Rightarrow \min \limits_{\bm{W}} {\rm Tr}(\bm{W}^T(\bm{X}\hat{\bm{D}}\bm{X}^T+\lambda\bm{I}_d)\bm{W}) -2{\rm Tr}(\bm{W}^T\bm{X}\hat{\bm{D}}\bm{G}^T).
				\end{split}
			\end{equation}
			Due to that $\bm{W}$ obeys the flexible Stiefel manifold. Therefore, Eq. (\ref{w_b_2}) can be simplified as
			\begin{equation}
				\label{w_loss}
				\mathcal{J}=\max_{\bm{W}^T\bm{S}^2\bm{W}= I} {\rm Tr}(\bm{W}^T\bm{X}\hat{\bm{D}}\bm{G}^T) \Leftrightarrow \mathcal{J}=\max_{\bm{Q}^T \bm{Q}= I} {\rm Tr}(\bm{Q}^T \bm{S}^{-1} \bm{X}\hat{\bm{D}}\bm{G}^T),
			\end{equation}
			where $\bm{S}=(\bm{X}\hat{\bm{D}}\bm{X}^T+\lambda \bm{I}_d)^{\frac{1}{2}}$ and $\bm{Q}=\bm{S}\bm{W}$. According to Lemma \ref{cal_w_manifold},  Eq. (\ref{w_loss}) has the analytic solution like
			\begin{equation}
				\label{result_w}
				\bm{W} = \bm{S}^{-1} \bm{U} \bm{\Lambda}^T \bm{V}^T,
			\end{equation}
			where $[\bm{U},\sim,\bm{V}]={\rm svd}(\bm{S}^{-1}\bm{X}\hat{\bm{D}}\bm{G}^T)$ and $\bm{\Lambda}=[\bm{I}_c, \bm{0}]$.
			
			\begin{myLemma} 
				\label{cal_w_manifold}
				For $\bm{Q},\bm{P} \in R^{m \times n}$ where $m > n$, the problem $\max_{\bm{Q}^T \bm{Q} = I} \bm{Q}^T\bm{P}$ has the closed-form results like $\bm{Q}=\bm{U}\bm{\Lambda}^T\bm{V}^T$. Among them, $[\bm{U},\bm{\Sigma},\bm{V}^T]={\rm svd}(\bm{P})$ and $\bm{\Lambda}=[\bm{I}_n,0]$.
			\end{myLemma}
		\end{proof}
		
		\section{Proof of Lemma 3}
		\begin{proof}
			Note that
			\begin{equation} \notag
				\begin{split}		
					{\rm tr}(\bm{Q}^T \bm{P}) & = {\rm tr}(\bm{Q}^T \bm{U} \bm{\Sigma} \bm{V}^T) = {\rm tr}(\bm{V}^T \bm{Q}^T \bm{U} \bm{\Sigma}) \\
					& = {\rm tr}(\bm{A} \bm{\Sigma}) = \sum \limits_{i = 1}^{n} a_{ii} \sigma_{ii}
				\end{split}	
				.
			\end{equation}
			where $\bm{A} = \bm{V}^T \bm{Q}^T \bm{U}$. Clearly, $\bm{A} \bm{A}^T = \bm{I}$ such that $a_{ij} \leq 1$. Hence, we have
			\begin{equation} \notag
				{\rm tr}(\bm{Q}^T \bm{P}) \leq \sum \limits_{i=1}^n \sigma_{ii}.	
			\end{equation}
			We can simply set $\bm{A} = \bm{V}^T \bm{Q}^T \bm{U} = \bm{\Lambda}$ where $\bm{\Lambda} = [\bm{I}_n, 0]$. In other words,
			\begin{equation}
				\bm{Q} = \bm{U} \bm{\Lambda}^T \bm{V}^T.
			\end{equation}
			Consequently, the lemma is proved.
		\end{proof}
	
	\end{appendix}

\end{document}